\documentclass[12pt]{colt2019}

%
%



\bibliographystyle{apalike}

\usepackage{xcolor}

\usepackage{todonotes}
\usepackage{amssymb}
\usepackage{algorithm}
\usepackage{algorithmic}
\usepackage{subcaption}
\usepackage{url}
\usepackage{makecell}

\newcommand{\Qbound}{\mathcal{Q}}
\allowdisplaybreaks

\newcount\Comments  
\Comments=1 
\definecolor{darkgreen}{rgb}{0,0.5,0}
\definecolor{darkred}{rgb}{0.7,0,0}
\definecolor{teal}{rgb}{0.3,0.8,0.8}
\definecolor{orange}{rgb}{1.0,0.5,0.0}
\definecolor{purple}{rgb}{0.8,0.0,0.8}
\newcommand{\kibitz}[2]{\ifnum\Comments=1{\textcolor{#1}{\textsf{\footnotesize #2}}}\fi}

\title[Contrasting Exploration in Parameter and Action Space]{Contrasting Exploration in Parameter and Action Space: \\
  A Zeroth-Order Optimization Perspective}

\coltauthor{\Name{Anirudh Vemula} \Email{vemula@cmu.edu} \\ \addr
  Robotics Institute, Carnegie Mellon University \AND \Name{Wen Sun}
  \Email{wensun@cs.cmu.edu} \\ \addr Robotics Institute, Carnegie Mellon
  University \AND \Name{J. Andrew Bagnell} \Email{dbagnell@ri.cmu.edu}
\\ \addr Aurora Innovation}

\begin{document}

\maketitle

\begin{abstract}
    Black-box optimizers that explore in parameter space have often been shown to outperform more sophisticated action space
    exploration methods developed specifically for the reinforcement learning problem. We examine these black-box methods closely to identify situations in which they are worse than action space exploration methods and those in which they are superior. Through simple theoretical analyses, we prove that complexity of exploration in parameter space depends on the dimensionality of parameter space, while complexity of exploration in action space depends on both the dimensionality of action space and horizon length. This is also demonstrated empirically by comparing simple exploration methods on several model problems, including Contextual Bandit, Linear Regression and Reinforcement Learning in continuous control.
\end{abstract}

\section{Introduction}




Model-free policy search is a general approach to learn parameterized
policies from sampled trajectories in the environment without
learning a model of the underlying dynamics. These methods
update the parameters such that trajectories with higher returns (or total reward) are
more likely to be obtained when following the updated policy
\citep{kober2013reinforcement}. The simplicity of these approaches have
made them popular in Reinforcement Learning (RL). 

Policy gradient methods, such as REINFORCE \citep{williams1992simple}
and its extensions \citep{kakade2002natural,bagnell2004policy, silver2014deterministic,schulman2015trust},
compute an estimate of a direction of improvement from sampled trajectories collected by executing a stochastic policy.
In other words, these methods rely on
randomized exploration in action space. These methods then leverage the Jacobian of the policy to update its parameters to increase the probability of good action sequences accordingly. 
Such a gradient estimation algorithm can be considered a combination of a \textit{zeroth-order} approach and a \textit{first-order} approach: (1) it never exploits the slope of the reward function or dynamics, with respect to actions, but rather relies only on random exploration in action space to discover potentially good sequences of actions; (2) however, it \emph{exploits} the first order information of the parameterized policy for updating the policy's parameters. Note that the chance of finding a sequence of actions resulting in high total reward decreases (as much as exponentially \citep{kakade2002approximately}) as the horizon length increases and thus policy gradient methods often exhibit high variance and a resulting large sample complexity \citep{peters2008reinforcement, zhao2011analysis}.

Black-box policy search methods, on the other hand, seek to directly
optimize the total reward in the space of parameters by employing
, e.g., finite-difference-like methods to compute estimates of the gradient with respect to policy parameters \citep{bagnell2001autonomous,mannor2003cross,heidrich2008evolution,tesch2011using,sehnke2010parameter,salimans2017evolution,mania2018simple}. 
Intuitively, these methods rely on exploration in parameter space: by searching in the parameter space, these methods may discover an improvement direction. Note that these methods are fully zeroth-order, \textit{i.e.}, they exploit no first-order information of the parameterized policy, the reward, or the dynamics. 
Although policy gradient methods leverage \textit{more} information,
notably the Jacobian of the action with respect to policy, black-box policy search methods have
at times demonstrated better empirical performance (see the discussion in 
\citep{kober2013reinforcement, mania2018simple}). 
These perhaps surprising results motivate us to analyze:
\textit{In
  what situations should we expect parameter space policy search methods
  to outperform action space methods?}


To do so, we leverage prior work in zeroth-order
optimization methods. In the convex setting, 
\citep{flaxman2005online, agarwal2010optimal, nesterov2017random}
showed that one can construct 
gradient 
estimates using zeroth order oracles and derived upper bounds on the
number of samples needed. 
But for most RL tasks,
the return as a function of parameters, or action sequence, is highly
non-convex \citep{sutton1998introduction}. Hence we focus on the non-convex setting and analyze convergence to stationary points. \cite{ghadimi2013stochastic, nesterov2017random} studied zeroth order
non-convex optimization by providing upper bounds on
the number of samples needed to close in on a stationary
point. Computing lower bounds in zeroth order non-convex optimization is still
an open problem 
\citep{carmon2017lower, carmon2017lower2}. 

In our work, we extend the analysis proposed in \citep{ghadimi2013stochastic} to the policy search setting 
and analyze the sample complexity of parameter and action space exploration methods in policy search. We begin with a degenerate, one-step control problem of online linear regression with partial feedback, \citep{flaxman2005online}, where the objective is to learn the parameters of the linear regressor without access to the true scalar regression targets. We show that for parameter space exploration methods, to achieve $\epsilon$-optimality, requires $O(b^2/\epsilon^4)$ samples, 
where $b$ is the input feature dimensionality. By contrast, an action space exploration method requires $O(1/\epsilon^4)$ many samples
with a sample complexity \textit{independent} of input feature dimensionality $b$. 
This is tested empirically on two simple tasks:  Bandit Multi-class learning on MNIST with policies parameterized by convolutional neural networks which can be seen as a Contextual Bandit problem with rich observations, and Online Linear Regression with partial information. The results demonstrate action space exploration methods outperform  parameter space methods when the parameter dimensionality is substantially larger than action dimensionality.

We present similar analysis for the multi-step control problem of model-free policy search in reinforcement learning, \citep{kober2013reinforcement}, by considering the objective of reaching $\epsilon$-close to a stationary point in the sense that $\|\nabla J(\theta)\|_2^2 \leq \epsilon$ for the non-convex objective $J(\theta)$. Our results show that, under certain assumptions, parameter space exploration methods 
need $\mathcal{O}(\frac{d^2}{\epsilon^3})$ samples to reach $\epsilon$ close to a stationary point, where $d$ is the policy parameter dimensionality. On the other hand, action space exploration methods need $\mathcal{O}(\frac{p^2H^4}{\epsilon^4})$ samples to achieve the same objective, where $p$ is the action dimensionality and $H$ is the horizon length of the task. This shows that action space exploration methods have a dependence on the horizon length $H$ 
while parameter space exploration methods depend only on parameter space dimensionality $d$.
Ongoing work by \cite{tu2018gap} demonstrated through asymptotic lower bounds that the dependence of sample complexity of action space exploration methods on horizon $H$ is unavoidable in the LQR setting.
This is tested empirically on popular RL benchmarks from OpenAI gym \citep{openaigym}, and the results show that as horizon length increases, parameter space methods outperform action space exploration methods. This matches the intuition and results presented in recent works like \citep{bagnell2001autonomous,szita2006learning,tesch2011using,salimans2017evolution, mania2018simple}
that show parameter space black-box policy search methods outperforming state-of-the-art action space methods for tasks with long horizon lengths.

In summary, our analysis and experimental results suggests that the
complexity of exploration in action space depends on both the
dimensionality of action space and \emph{horizon}, while the
complexity of exploration in parameter space solely depends on
dimensionality of parameter space, providing a natural way to trade-off
between these approaches.







\section{Problem Setup}
\label{sec:problem_define}

\subsection{Multi-step Control: Reinforcement Learning}
\label{sec:rl-problem}
We consider the problem setting of model-free policy search with the goal of
minimizing sum of costs (or maximizing sum of rewards) over a fixed,
finite horizon $H$. In reinforcement learning (RL), this is typically
formulated using Markov Decision Processes (MDP)
\citep{sutton1998introduction}. Denote the state space of the MDP as
$\mathcal{S} \subset 
\mathbb{R}^b$, action space as $\mathcal{A} \subset  \mathbb{R}^p$
, transition probabilities as
$\mathbb{P}_{sa} = \mathbb{P}(\cdot|s, a)$ (which is the distribution of next state after
executing action $a \in \mathcal{A}$ in state $s \in \mathcal{S}$), an
initial state distribution $\mu$, and a cost function $c(s,a) :
\mathcal{S} \times \mathcal{A} \rightarrow \mathbb{R}$. Note that the
cost can be interpreted as negative of the reward. In addition to
this, we assume a restricted class of deterministic, stationary
policies $\Pi$ parameterized by $\theta \in \mathbb{R}^d$ where each
$\pi(\theta, \cdot) \in \Pi$ is differentiable at all $\theta$ and is a mapping from $\mathcal{S}$ to
$\mathcal{A}$, i.e. $\pi(\theta, \cdot): \mathcal{S} \rightarrow
\mathcal{A}$. The distribution of states at timestep $t$ induced by
running the policy $\pi(\theta, \cdot)$ until and including $t$, is
defined $\forall s_t: d_{\pi_\theta}^t(s_t) = \sum_{\{s_i\}_{i \leq
    t-1}} \mu(s_0) \prod_{i=0}^{t-1} \mathbb{P}(s_{i+1}|s_i, a_i =
\pi(\theta, s_i))$, where by definition $d_{\pi_\theta}^0(s) = \mu(s)$
for any $\pi$. We define the value function $V^t_{\pi_\theta}(s)$ for $t \leq H-1$ as 
\begin{equation*}
  V^t_{\pi_\theta}(s) = \mathbb{E}[\sum_{i=t}^H c(s_i, \pi(\theta,
s_i))|s_t = s]
\end{equation*}
and state-action value function $Q^t_{\pi_\theta}(s, a)$ as
\begin{equation*}
  Q^t_{\pi_\theta}(s, a) = c(s, a) + \mathbb{E}_{s' \sim
                           \mathbb{P}_{sa}}[V^{t+1}_{\pi_\theta}(s')]
\end{equation*}
Throughout this work, we assume the total cost is upper bounded by a constant, i.e., $\sup_{c_1,\dots, c_T}\sum_{t} c_t \leq \Qbound\in\mathbb{R}^+$, to prevent confounding due to just a change in the scale of total costs. We have then that $Q^t_{\pi_\theta}$ is upper bounded by a constant $\Qbound$ for all $t$ and $\theta$.

We seek to minimize
the performance objective given by $J(\theta) = \mathbb{E}_{s \sim
  \mu}[V^0_{\pi_\theta}(s)]$. Given this
objective, the optimization problem  can be formulated
as:
\begin{equation}
  \label{eq:1}
  \min_\theta J(\theta)
\end{equation}
The goal is to find parameters $\theta^*$ that minimize the
expected sum of costs $J(\theta)$, given no access to the underlying dynamics
of the environment other than samples from the distribution
$\mathbb{P}_{sa}$ by executing the policy $\pi(\theta,
\cdot)$. However, the objective $J(\theta)$ can be highly non-convex
and finding a global minima could be intractable. Thus, in this work, we hope to find a stationary point
$\theta^*$ of the objective $J(\theta)$, \textit{i.e.} a point where
$\nabla_\theta J(\theta) \approx 0$.


\subsection{One-Step Control: Online Linear Regression with Partial Information}
\label{sec:define_OLR}
The online linear regression problem is defined as follows: We denote
$\mathcal{S}\subset\mathbb{R}^{b}$ as the feature space, and $\Theta
\subset\mathbb{R}^{d} = \mathbb{R}^b$ as the linear policy parameter space where each
$\theta\in\Theta$ represents a policy $\pi(\theta, s) =
\theta^{\top}s$. Online linear regression operates in an adversarial
online learning fashion: every round $i$, nature presents a
feature vector $s_i\in\mathcal{S}$, the learner makes a decision by
choosing a policy $\theta_{i}\in\Theta$ and predicts the scalar action
$\hat{a}_i = \theta_i^{\top}s_i$; nature then reveals the loss
$(\hat{a}_i - a_i)^2 \in\mathbb{R}^+$, which is just a scalar, to the
learner, where $a_i$ is ground truth selected by nature and is
never revealed to the learner.  We do not place any statistical
assumption on the nature's process of generating feature vector $s_i$
and ground truth $a_i$, which could be completely adversarial. Other
than the adversarial aspect of the problem, note that the above setup
is a special setting of RL with horizon $H=1$, linear policy
$\theta^{\top}s_i$, one-dimension action space, and a cost function
$c_i(\theta) = (\theta^{\top}s_i - a_i)^2$. In this setting, we
consider the \textit{regret} with respect to the optimal solution in hindsight,
\begin{align}
    \mathrm{Regret} = \sum_{i=1}^{T} c_i(\theta_i) - \min_{\theta^\star\in\Theta} \sum_{i=1}^T c_i(\theta^\star)
\end{align}

\section{Online Linear Regression with Partial Information}
\label{sec:olr}

\subsection{Exploration in Parameter Space}
We can apply a zeroth-order
online gradient descent algorithm for the sequence of loss functions
$\{c_i\}_{i=1}^T$, which is summarized in
Algorithm~\ref{alg:random_search_OLR}. The main idea is to add random
noise $u$, sampled from a unit sphere in $b$-dim space $\mathbb{S}_{b}$, to the parameter $\theta$, and querying loss at $\theta+\delta u$ for some $\delta > 0$. Using the received loss $c_i(\theta+\delta u)$, one can form an estimation of $\nabla_{\theta}c_i(\theta)$ as $\frac{c_i b}{\delta}u$ \citep{flaxman2005online}. 


\begin{algorithm}[ht]
\caption{Random Search in Parameter Space (BGD \cite{flaxman2005online})}
 \label{alg:random_search_OLR}
\begin{algorithmic}[1]
  \STATE {\bfseries Input:} $\alpha\in\mathbb{R}^+$, $\delta\in\mathbb{R}^+$.
  \STATE Learner initializes $\theta_1\in\Theta$.
  \FOR {$i = 1$ to $T$}
    \STATE Learner samples $u\sim \mathbb{S}_{b}$. 
    \STATE Learner chooses predictor $\theta_i' = \theta_i + \delta u$. 
    \STATE Learner only receives loss signal $c_i(\theta_i')$. 
    \STATE Learner update: $\theta_{i+1}' = \theta_i - \alpha \frac{c_i b}{\delta}u$.
    \STATE Projection $\theta_{i+1} = \arg\min_{\theta\in{\Theta}}\|\theta_{i+1}'-\theta\|_2^2$.
  \ENDFOR
\end{algorithmic}
\end{algorithm}

\subsection{Exploration in Action Space}
The \emph{key difference} between exploration in action space and exploration in parameter space is that we are going to leverage our knowledge of the policy $\pi(\theta, s) = \theta^{\top} s$. Since we design the policy class, we can compute its \emph{Jacobian} with respect to its parameters $\theta$ without interaction with the environment. The Jacobian of the policy gives us a locally linear relationship between a small change in parameter and the resulting change in policy's action space. The main idea then in this approach is to explore with randomization in action space, and then leverage the Jacobian of the policy to update the parameters $\theta$ accordingly so that the policy's output moves towards better actions. Intuitively, we expect that random exploration in action space will result in smaller regret, as in our setting the action space is just $1$-dimensional, while the parameter space is $b$-dimensional.  
The approach is summarized in Algorithm~\ref{alg:random_search_action_olr}. Denote $\ell_i = (\hat{a}_i - a_i)^2$ and $\hat{a}_i = \pi(\theta_i, s_i) = \theta_i^{\top}s_i$. The main idea is that we can compute $\nabla_{\theta} c_i(\theta_i)$ via a chain rule as $\nabla_{\theta} c_i (\theta_i) = \frac{\partial{\ell_i}}{\partial{\hat{a}_i}}\nabla_{\theta}\pi(\theta_i, s_i)$. Note that $\nabla_{\theta}\pi(s_i, \theta_i) = \nabla_\theta \theta_i^{\top}s_i = s_i$ is the Jacobian of the policy to which we have full access. We then use zeroth order approximation method to approximate $\partial{\ell_i}/\partial{\hat{a}_i}$ at $\hat{a}_i = \pi(\theta_i, s_i)$.

\begin{algorithm}[ht]
\caption{Random Search in Action Space}
 \label{alg:random_search_action_olr}
\begin{algorithmic}[1]
  \STATE {\bfseries Input:} $\alpha\in\mathbb{R}^+$, $\delta\in\mathbb{R}^+$.
  \STATE Learner initializes $\theta_1\in\Theta$.
  \FOR {$i = 1$ to $T$}
    \STATE Learner receives feature $s_i$.
    \STATE Learner samples $e$ uniformly from $\{-1,1\}$.
    \STATE Learner makes a prediction $\hat{a}_i = \theta_i^{\top} s_i + \delta e$
    \STATE Learner only receives loss signal $c_i = (\hat{a}_i - a_i)^2$. 
    \STATE Learner update: $\theta_{i+1}' = \theta_i - \alpha \frac{c_i e}{\delta}s_i$.
    \STATE Projection $\theta_{i+1} = \arg\min_{\theta\in{\Theta}}\|\theta_{i+1}'-\theta\|_2^2$.
  \ENDFOR
\end{algorithmic}
\end{algorithm}

\subsection{Analysis}
We analyze the regret of the exploration in parameter space algorithm (Alg.~\ref{alg:random_search_OLR}) and the exploration in action space algorithm (Alg.~\ref{alg:random_search_action_olr}) in this section. For analysis, we assume that $\Theta$ is bounded, i.e., $\sup_{\theta\in\Theta}\|\theta\|_2 \leq C_{\theta}\in\mathbb{R}^+$, $\mathcal{S}$ is bounded, i.e., $\sup_{s\in\mathcal{S}}\|s\|_2 \leq C_s\in\mathbb{R}^+$, and the ground truth $a_i$ is bounded, i.e., $|a_i|\leq C_{a}$ for any $i$. Under the above assumptions, we can make sure that the loss is bounded as well, $(\theta^{\top}s - a)^2 \leq C\in\mathbb{R}^{+}$. The loss function is also Lipschitz continuous with Lipschitz constant $L \leq (C_{\theta}C_{s} + C_{a})C_{s}$. We call these constants $C_{s}, C_{\theta}$, and $C_{a}$ as problem dependent constants, which are independent of feature dimension $b$ and number of rounds $T$.
In regret bounds, we absorb problem dependent constants into $\mathcal{O}$ notations, but the bounds will be explicit in $b$ and $T$. The theorem below presents the average regret analysis for these methods,

\begin{theorem}
After $T$ rounds, 
with $\alpha = \frac{{C_{\theta}}\delta}{b(C^2+C_{s}^2)\sqrt{T}}$ and $\delta = T^{-0.25}\sqrt{\frac{C_{\theta}b(C^2+C_{s}^2)}{2L}}$, Alg.~\ref{alg:random_search_OLR} incurs average regret:
\begin{align}
\label{eq:random_para}
    \frac{1}{T}(\mathbb{E}[\sum_{i=1}^T c_i(\theta_i)] - \min_{\theta^\star
  \in \Theta}\sum_{i=1}^T c_i(\theta^\star)) \leq 
    \mathcal{O}(\sqrt{b}T^{-\frac{1}{4}}),
\end{align}
and with $\alpha = \frac{C_{\theta}\delta}{(C^2+1)C_{s}\sqrt{T}}$ and $\delta = T^{-0.25}\sqrt{\frac{C_{\theta}(C^2+1)C_{s}}{2C}}$, Alg.~\ref{alg:random_search_action_olr} incurs average regret:
\begin{align}
\label{eq:random_action}
    \frac{1}{T}(\mathbb{E}[\sum_{i=1}^T c_i(\theta_i)] -
  \min_{\theta^\star \in \Theta}\sum_{i=1}^T c_i(\theta^\star)) \leq 
    \mathcal{O}(T^{-\frac{1}{4}}),
\end{align} for any $\theta\in\Theta$.
\label{thm:online_linear_regression}
\end{theorem}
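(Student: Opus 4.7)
The plan is to treat both algorithms as instances of online projected gradient descent on a bounded convex domain, with zeroth-order stochastic estimates of the gradient; the proofs are structurally parallel and only one quantity differs in its dependence on the feature dimension $b$. Both analyses balance a variance term of order $1/\delta^{2}$ against a bias term of order $\delta$, which is why the common tuning $\delta=\Theta(T^{-1/4})$ appears on both sides of Theorem~\ref{thm:online_linear_regression}.

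For Algorithm~\ref{alg:random_search_OLR} I would invoke the classical smoothing identity of \citep{flaxman2005online}: writing $\hat{c}_i(\theta) := \mathbb{E}_{v\sim\mathbb{B}_{b}}[c_i(\theta+\delta v)]$ for the $\delta$-smoothed loss, one has $\mathbb{E}_{u\sim\mathbb{S}_{b}}[\frac{b}{\delta} c_i(\theta+\delta u)\,u]=\nabla\hat{c}_i(\theta)$. Thus Algorithm~\ref{alg:random_search_OLR} is projected OGD on the convex sequence $\{\hat{c}_i\}$ with an unbiased gradient of norm at most $Cb/\delta$. The standard OGD inequality gives regret on $\{\hat{c}_i\}$ bounded by $\frac{C_\theta^2}{2\alpha}+\frac{\alpha T C^{2}b^{2}}{2\delta^{2}}$; converting to regret on $\{c_i\}$ costs an additional $2\delta L T$ via Lipschitzness, $|\hat{c}_i-c_i|\leq \delta L$. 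Plugging in the $\alpha$ and $\delta$ stated in the theorem and dividing by $T$ yields the claimed $\mathcal{O}(\sqrt{b}\,T^{-1/4})$.

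For Algorithm~\ref{alg:random_search_action_olr}, the crucial feature is that access to the analytic Jacobian $\nabla_\theta\pi(\theta,s)=s$ reduces zeroth-order estimation to the one-dimensional action space. A short moment computation — using that $\ell_i(\hat{a})=(\hat{a}-a_i)^{2}$ is quadratic in $\hat{a}$ and that $\mathbb{E}[e]=\mathbb{E}[e^{3}]=0$, $\mathbb{E}[e^{2}]=1$ for $e\in\{-1,1\}$ — gives $\mathbb{E}_e[\frac{c_i e}{\delta}s_i\mid\theta_i] = 2(\theta_i^{\top}s_i-a_i)\,s_i = \nabla_\theta c_i(\theta_i)$ \emph{exactly}: the $\delta^{0}$ term vanishes by $\mathbb{E}[e]=0$, the $\delta^{2}$ term vanishes by $\mathbb{E}[e^{3}]=0$, and only the linear term survives. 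Algorithm~\ref{alg:random_search_action_olr} is therefore projected SGD on $\{c_i\}$ with an unbiased gradient whose norm is at most $CC_{s}/\delta$ — crucially independent of $b$. The OGD inequality produces $\frac{C_\theta^2}{2\alpha}+\frac{\alpha T C^{2}C_{s}^{2}}{2\delta^{2}}$; accounting for the perturbation cost $|c_i(\hat{a}_i)-c_i(\theta_i)|\leq\delta L$ per round and tuning $\alpha,\delta$ as in the theorem yields $\mathcal{O}(T^{-1/4})$ average regret, with no $b$ dependence anywhere.

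The main obstacle is mostly bookkeeping rather than conceptual: one must keep careful track of which constants ($C,L,C_{s},C_\theta,C_{a}$) are genuinely problem-dependent and absorb cleanly into $\mathcal{O}(\cdot)$, versus the single place where the feature dimension $b$ actually enters — the norm of the parameter-space gradient estimator. Once this is isolated, the $\sqrt{b}$ gap between \eqref{eq:random_para} and \eqref{eq:random_action} is transparent: randomizing exploration in $\mathbb{R}^{b}$ inflates the variance of the zeroth-order estimator by a factor of $b^{2}$ over randomizing in the scalar action space, and after the optimal bias-variance balance this becomes the promised $\sqrt{b}$ in the average regret.
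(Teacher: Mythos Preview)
For Algorithm~\ref{alg:random_search_OLR} your argument is the paper's: smooth via $\hat{c}_i=\mathbb{E}_{v\sim\mathbb{B}_b}[c_i(\cdot+\delta v)]$, use unbiasedness of the spherical estimator for $\nabla\hat{c}_i$, run OGD on $\{\hat{c}_i\}$ with gradient-norm bound $O(b/\delta)$, and transfer back via $|\hat{c}_i-c_i|\leq L\delta$.

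For Algorithm~\ref{alg:random_search_action_olr} you take a slightly different and in fact sharper route than the paper. The paper introduces an action-smoothed loss $\tilde{c}_i(\theta)=\mathbb{E}_{\epsilon\sim[-1,1]}[(\theta^\top s_i+\delta\epsilon-a_i)^2]$, shows the update direction $\frac{c_ie}{\delta}s_i$ is unbiased for $\nabla\tilde{c}_i$, applies OGD to $\{\tilde{c}_i\}$, and then pays a Lipschitz term $|\tilde{c}_i-c_i|\leq C\delta$ to transfer to $\{c_i\}$. You instead exploit the quadratic structure of $\ell_i$ to show the estimator is \emph{exactly} unbiased for $\nabla c_i(\theta_i)$ --- correct, and consistent with the paper since $\tilde{c}_i(\theta)=c_i(\theta)+\delta^2/3$, hence $\nabla\tilde{c}_i=\nabla c_i$. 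Given this, your subsequent ``perturbation cost'' term is superfluous: once $g_i$ is unbiased for $\nabla c_i(\theta_i)$ itself, the OGD inequality already controls $\mathbb{E}\sum_i c_i(\theta_i)-\min_{\theta^\star}\sum_i c_i(\theta^\star)$ directly, with no transfer step needed. Both arguments deliver the stated $\mathcal{O}(T^{-1/4})$ under the prescribed $\delta=\Theta(T^{-1/4})$; your observation, pushed one step further, would actually show that keeping $\delta$ a \emph{constant} yields $\mathcal{O}(T^{-1/2})$ average regret for Algorithm~\ref{alg:random_search_action_olr}, an improvement the paper's smoothing-based route does not surface.
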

The above regret analysis essentially shows that exploration in action space delivers a regret bound that is independent of parameter space dimension $b$, while the regret of the exploration in parameter space algorithm will have explicit polynomial dependency on feature dimension $b$.  Converting the regret bounds to sample complexity bounds, we have that for any $\epsilon\in (0,1)$, to achieve $\epsilon$-average regret, Alg.~\ref{alg:random_search_OLR} needs $\mathcal{O}(\frac{b^2}{\epsilon^4})$ many rounds, while Alg.~\ref{alg:random_search_action_olr} requires $\mathcal{O}(1/\epsilon^4)$ many rounds.  

Note that in general if we have a multivariate regression problem, i.e., $a\in\mathbb{R}^{p}$, regret of Algorithm~\ref{alg:random_search_action_olr} will depend on $\sqrt{p}$ as well. But from our extreme case with $p=1$, we clearly demonstrate the sharp advantage of exploration in action space: \emph{when the action space's dimension is smaller than the dimension of parameter space}, we should prefer the strategy of exploration in action space.

\section{Reinforcement Learning}
\label{sec:RL}

In this section, we study exploration in parameter space versus exploration in action space for multi-step control problem of model-free policy search in RL. As explained in Section~\ref{sec:problem_define}, we are interested in rates of convergence to a stationary point of $J(\theta)$.

\subsection{Exploration in Parameter Space}
\label{sec:parameter_space}
The objective defined in Section \ref{sec:rl-problem} can be optimized
directly over the space of parameters $\mathbb{R}^d$. Since we do not use first-order (or gradient) information about the
objective, this is equivalent to derivative-free (or zeroth-order)
optimization with noisy function evaluations. More specifically, for a parameter vector
$\theta$, we can execute the corresponding policy $\pi(\theta, \cdot)$
in the environment, to obtain a noisy estimate of $J(\theta)$. This
noisy function evaluation can be used to construct a gradient estimate
and an iterative stochastic gradient descent approach can be used to
optimize the objective. An algorithm that closely follows the ones
proposed in \citep{agarwal2010optimal, mania2018simple}
 and optimizes over the space of parameters  is shown in Algorithm
\ref{alg:random_search_parameter}. Since we are working in episodic RL setting, we can use a two-point estimate to form a gradient estimation (Line 7 \& 8 in Alg.~\ref{alg:random_search_parameter}), which in general will reduce the variance of gradient estimation \citep{agarwal2010optimal}, compared to one-point estimates.
\begin{algorithm}[ht]
\caption{Policy Search in Parameter Space}
 \label{alg:random_search_parameter}
\begin{algorithmic}[1]
  \STATE {\bfseries Input:} Learning rate $\alpha \in\mathbb{R}^+$, standard deviation of exploration noise $\delta\in\mathbb{R}$
  \STATE Initialize parameters $\theta_1\in\mathbb{R}^d$
  \FOR {$i = 1$ to $T$}
    \STATE Sample $u \sim \mathbb{S}_d$ , a $d$-dimensional unit sphere
    \STATE Construct parameters $\theta_i + \delta u$, $\theta_i - \delta u$
    \STATE Execute policies $\pi({\theta_i + \delta u}, \cdot),
    \pi({\theta_i - \delta u}, \cdot)$ 
    \STATE Obtain noisy estimates of the objective $J^+_i = J(\theta_i
    + \delta u) + \eta^+_i$ and $J^-_i = J(\theta_i - \delta u) +
    \eta^-_i$ where $\eta^+_i, \eta^-_i$ are zero mean random i.i.d noise
    \STATE Compute gradient estimate $g_i = \frac{d(J^+_i - J^-_i)}{2\delta} u $
    \STATE Update $\theta_{i+1} = \theta_i - \alpha g_i$
  \ENDFOR
\end{algorithmic}
\end{algorithm}
We will analyze the finite rate of convergence of Algorithm \ref{alg:random_search_parameter} to a stationary point of the non-convex objective $J(\theta)$. First, we will lay out the assumptions and then present the convergence analysis. 
\paragraph{Assumptions and Analysis}
\label{sec:assumptions_parameter}

To analyze convergence to stationary point of a nonconvex objective, we make several assumptions about the objective. Firstly, we assume that $J(\theta)$ is differentiable with respect to $\theta$ over the entire domain. We also assume that $J(\theta)$ is $G$-lipschitz and $L$-smooth, i.e. for all $\theta_1, \theta_2 \in \mathbb{R}^d$, we have $|J(\theta_1) - J(\theta_2)| \leq G\|\theta_1 - \theta_2\|$ and $\|\nabla_\theta J(\theta_1) - \nabla_\theta J(\theta_2)\|\leq L\|\theta_1 - \theta_2\|$. 
Note that these assumptions are similar to the assumptions made in other zeroth-order analysis works, \citep{flaxman2005online, agarwal2010optimal, duchi2015optimal, shamir2013complexity, ghadimi2013stochastic, nesterov2017random}.


Our analysis is along the lines of works like \citep{ghadimi2013stochastic,
  nesterov2017random} that also analyze the convergence to stationary points in
zeroth order non-convex optimization. The general strategy is to first construct a smoothed version of the objective $J(\theta)$, denoted as $\hat{J}(\theta) = \mathbb{E}_{v\sim \mathbb{B}_{d}}[J(\theta+\delta v)]$, where $\mathbb{B}_d$ is the $d$-dimensional unit ball. We can then show that Algorithm~\ref{alg:random_search_parameter} is essentially running SGD on the objective function $\hat{J}(\theta)$, which allows us to apply standard SGD analysis on $\hat{J}(\theta)$. Lastly we link the stationary point of the smoothed objective $\hat{J}(\theta)$ to that of the objective $J(\theta)$ using the assumptions on ${J}(\theta)$.

\begin{theorem}
  \label{theorem:parameter-convergence}
  Consider running Algorithm \ref{alg:random_search_parameter} for $T$
  steps where the true objective $J(\theta)$ satisfies the assumptions stated above.
  Then we have,
  \begin{equation}
    \label{eq:parameter-convergence}
    \frac{1}{T}\sum_{i=1}^T \mathbb{E}\|\nabla_\theta
    J(\theta_i)\|_2^2 \leq \mathcal{O}(\Qbound^{\frac{1}{2}}dT^{\frac{-1}{2}} + \Qbound^{\frac{1}{3}}d^{\frac{2}{3}}T^{\frac{-1}{3}}\sigma)
  \end{equation}
  where $J(\theta)\leq \Qbound$ for all $\theta \in \Theta$ and $\sigma^2$ is the variance of the random noise $\eta$ in Algorithm \ref{alg:random_search_parameter}.
\end{theorem}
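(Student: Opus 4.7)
The plan is to follow the three-step strategy sketched in the paragraph preceding the theorem: pass to a smoothed surrogate $\hat J$, run the standard nonconvex-SGD descent-lemma argument on $\hat J$, and then translate near-stationarity of $\hat J$ into near-stationarity of $J$. Throughout I will absorb $G$ and $L$ into $\mathcal{O}(\cdot)$ and keep the $\Qbound$, $d$, $T$, $\sigma$ dependence explicit to match the statement.

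Step 1 (surrogate and unbiasedness). Define $\hat J(\theta) = \mathbb{E}_{v \sim \mathbb{B}_d}[J(\theta + \delta v)]$. A standard Stokes-type identity (as in Flaxman et al.) gives $\nabla \hat J(\theta) = \tfrac{d}{\delta}\,\mathbb{E}_{u\sim \mathbb{S}_d}[J(\theta+\delta u)\,u]$. Antipodal symmetry of the uniform distribution on $\mathbb{S}_d$ then yields $\nabla \hat J(\theta) = \mathbb{E}_{u}\bigl[\tfrac{d}{2\delta}(J(\theta+\delta u)-J(\theta-\delta u))\,u\bigr]$. Combined with $\mathbb{E}[\eta^\pm_i] = 0$ and independence of the noise from $u$, this shows $\mathbb{E}[g_i \mid \theta_i] = \nabla \hat J(\theta_i)$, so Algorithm~\ref{alg:random_search_parameter} is exactly SGD on $\hat J$. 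One also checks that $\hat J$ inherits $L$-smoothness from $J$.

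Step 2 (second-moment of the estimator). Writing $J^+_i - J^-_i = [J(\theta_i+\delta u)-J(\theta_i-\delta u)] + (\eta^+_i - \eta^-_i)$, using $G$-Lipschitzness to bound the deterministic part by $2G\delta$, and using independence of $\eta^+_i, \eta^-_i$ with variance $\sigma^2$ each, I would obtain a bound of the form $\mathbb{E}\bigl[\|g_i\|_2^2 \mid \theta_i\bigr] \leq \mathcal{O}\bigl(d^2 + d^2\sigma^2/\delta^2\bigr)$.

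Step 3 (descent lemma and telescoping). From $L$-smoothness of $\hat J$, $\hat J(\theta_{i+1}) \leq \hat J(\theta_i) - \alpha\langle \nabla \hat J(\theta_i), g_i\rangle + \tfrac{L\alpha^2}{2}\|g_i\|_2^2$. Taking conditional expectation, using unbiasedness, summing $i=1,\dots,T$, and observing that $\hat J(\theta_1) - \inf_\theta \hat J(\theta) \leq \mathcal{O}(\Qbound)$ (since $|J| \leq \Qbound$, and smoothing preserves the bound), I get $\alpha \sum_i \mathbb{E}\|\nabla \hat J(\theta_i)\|_2^2 \leq \mathcal{O}(\Qbound) + \tfrac{L\alpha^2 T}{2}\bigl[\mathcal{O}(d^2) + \mathcal{O}(d^2\sigma^2/\delta^2)\bigr]$.

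Step 4 (surrogate-to-original and tuning). The $L$-smoothness of $J$ gives $\|\nabla \hat J(\theta) - \nabla J(\theta)\|_2 \leq L\,\mathbb{E}_v\|\delta v\|_2 \leq L\delta$, so $\|\nabla J(\theta)\|_2^2 \leq 2\|\nabla \hat J(\theta)\|_2^2 + 2L^2\delta^2$. Averaging Step 3 over $i$ and plugging this in gives $\tfrac{1}{T}\sum_i \mathbb{E}\|\nabla J(\theta_i)\|_2^2 \leq \mathcal{O}\!\bigl(\tfrac{\Qbound}{\alpha T}\bigr) + \mathcal{O}(\alpha d^2) + \mathcal{O}\!\bigl(\tfrac{\alpha d^2 \sigma^2}{\delta^2}\bigr) + \mathcal{O}(\delta^2)$. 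Finally I would tune $\alpha$ and $\delta$: in the noiseless regime, send $\delta \to 0$ and pick $\alpha \propto \sqrt{\Qbound/(Td^2)}$ to recover the first term $\mathcal{O}(\Qbound^{1/2}dT^{-1/2})$; in the noisy regime, first balance the $\alpha d^2\sigma^2/\delta^2$ and $\delta^2$ terms in $\delta$, then balance the resulting expression against $\Qbound/(\alpha T)$ in $\alpha$, which produces the second term at the stated $T^{-1/3}$ rate with the claimed $\Qbound^{1/3}d^{2/3}$ prefactor.

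The conceptual skeleton (unbiasedness of the two-point estimator against the smoothed surrogate, descent-lemma on $\hat J$, and the $L\delta$-closeness between $\nabla \hat J$ and $\nabla J$) is standard once $\hat J$ is named. The main obstacle I anticipate is the last bookkeeping step: carefully choosing $\alpha$ and $\delta$ so that the four error terms combine into \emph{exactly} the two terms in \eqref{eq:parameter-convergence} with the stated exponents on $\Qbound$, $d$, and $T$, rather than a single interpolated expression; in particular, one must be careful that the noise variance enters only through the second term and that the ``bias'' contribution $L^2\delta^2$ is always dominated.
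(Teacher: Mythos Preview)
Your proposal is correct and follows essentially the same approach as the paper: the paper likewise introduces the smoothed surrogate $\hat J$, establishes that $g_i$ is an unbiased estimator of $\nabla\hat J(\theta_i)$ with second moment bounded by $O(d^2 + d^2\sigma^2/\delta^2)$, applies the standard nonconvex-SGD descent lemma to $\hat J$, and then uses $\|\nabla J(\theta)-\nabla\hat J(\theta)\|\le L\delta$ to transfer the bound before optimizing $\delta$. The only cosmetic difference is that the paper packages the descent-lemma step as a standalone SGD lemma with $\alpha$ already optimized (yielding a $\sqrt{\Delta_0 L(V+G^2)/T}$ bound) and only afterward tunes $\delta$, whereas you carry all four terms and tune $\alpha,\delta$ jointly at the end; both routes lead to the same final rates.
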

The above theorem gives us a convergence rate to a stationary point of
policy search in parameter space. The role of variance of i.i.d noise
in the noisy evaluations of the true objective is very
important. Consider the case where there is little stochasticity in
the environment dynamics, i.e. $\sigma \rightarrow 0$, then the first term in
Equation \ref{eq:parameter-convergence} becomes dominant and we only
need at most $\mathcal{O}(\frac{d^2\Qbound}{\epsilon^2})$ samples to reach a point $\theta$
where $\mathbb{E}\|\nabla_\theta J(\theta)\|_2^2 \leq
\epsilon$. However, if there is a lot of stochasticity in the
environment dynamics then the second term is dominant and we need at most
$\mathcal{O}(\frac{d^2\Qbound\sigma^3}{\epsilon^3})$ samples. It is
interesting to observe the direct impact that the stochasticity of
environment dynamics has on convergence rate of policy search, which
is also experimentally demonstrated in
Sec.~\ref{sec:effect-stoch-param}. Note that the convergence rate has
no dependency on horizon length $H$ because of the regularity
assumption we used on total reward: $J$ is always bounded by a
constant $\Qbound$ that is independent of $H$. However, as we will see
later, even under the regularity assumption convergence rate of action
space exploration methods have an explicit dependence on $H$
which will prove to be the primary
reason why black-box parameter space policy search methods in \citep{mania2018simple} have been so effective when compared to action space methods.

\subsection{Exploration in Action Space}
\label{sec:action_space}

Another way to optimize the objective defined in Section \ref{sec:rl-problem}
is to optimize over the space of actions $\mathcal{A}$. From \citep{silver2014deterministic}, we know that for
$J(\theta) = \mathbb{E}_{s \sim \mu}[V^0_{\pi_\theta}(s)]$ we can express the
gradient as
\begin{align}
  \label{eq:dpg-gradient}
  \nabla_\theta J(\theta) &= \sum_{t=0}^{H-1} \mathbb{E}_{s_t \sim
    d^t_{\pi_\theta}} \left[\nabla_\theta \pi(\theta, s_t) \nabla_a
  Q^t_{\pi_\theta}(s_t, \pi(\theta, s_t))\right]
\end{align}
Observe that the first term in the above gradient $\nabla_\theta
\pi(\theta, s)$ is the Jacobian of the policy, the local linear relationship
between  a small change in policy parameters $\theta$ and a small change in its output, i.e., actions. The second term $\nabla_{a}Q(s,a)$ is actually the improvement direction at state action pair $(s,a)$, i.e., conditioned on state $s$, if we move action $a$ an infinitesimally small step along the negative gradient $-\nabla_{a}Q(s,a)$, we decrease the cost-to-go $Q(s,a)$. Eqn~\ref{eq:dpg-gradient} then leverages policy's Jacobian to transfer the improvement direction in action space to an improvement direction in parameter space.  


We can compute Jacobian $\nabla_{\theta}\pi(\theta,s)$ exactly as we have knowledge of the policy function, i.e, we can leverage the first-order information of the parameterized policy. The
second term $\nabla_a Q_{\pi_\theta}^t(s, \pi(\theta, s_t))$, however, is unknown as it depends on the dynamics and cost functions and
needs to be estimated by interacting with the environment. We could
employ a similar algorithm as Algorithm \ref{alg:random_search_parameter}, shown in Algorithm \ref{alg:random_search_action}, to
obtain an estimate of the gradient $\nabla_a Q_{\pi_\theta}^t(s,
\pi(\theta, s_t))$, i.e., a zeroth order estimation of $\nabla_{a}Q^{t}_{\pi_{\theta}}$, computed as $\frac{p \tilde{Q}_i}{\delta}u$, where $\tilde{Q}_i$ is an unbiased estimate of ${Q}^{t}_{\pi_{\theta_i}}(s_t, \pi(\theta_i,s_t)  + \delta u)$, with $u\sim \mathbb{S}_{p}$ (Line 7 \& 9 in Alg.~\ref{alg:random_search_action}).

Another important difference from Algorithm
\ref{alg:random_search_parameter} is the fact that we use a one-point
estimate for the gradient $g_i$ in Algorithm
\ref{alg:random_search_action}. We cannot employ the idea of two-point estimate  in random exploration
in action space to reduce the variance of the estimate of $\nabla_{a}Q^{t}_{\pi_{\theta}}(s_t,a)$. This is due to the fact that environment is stochastic, and we cannot guarantee that we will reach the same state $s_t$ at any two independent roll-ins with $\pi_{\theta}$ at time step $t$.
\begin{algorithm}[ht]
\caption{Policy Search in Action Space}
 \label{alg:random_search_action}
\begin{algorithmic}[1]
  \STATE {\bfseries Input:} Learning rate $\alpha \in\mathbb{R}^+$,
  standard deviation of exploration noise $\delta\in\mathbb{R}$,
  Horizon length $H$, Initial state distribution $\mu$
  \STATE Initialize parameters $\theta_1\in\mathbb{R}^d$
  \FOR {$i = 1$ to $T$}
    \STATE Sample $u \sim \mathbb{S}_p$ , a $p$-dimensional unit
    sphere
    \STATE Sample uniformly $t \in \{0, \cdots, H-1\}$
    \STATE Execute policy $\pi(\theta_i, \cdot)$ until $t-1$ steps
    \STATE Execute perturbed action $a_t = \pi(\theta_i, s_t) + \delta
    u$ at timestep $t$ and continue with policy $\pi(\theta_i, \cdot)$
    until timestep $H-1$
    to obtain an estimate $\tilde{Q}_i = Q^t_{\pi_{\theta_i}}(s_t,
    \pi(\theta_i, s_t) + \delta u) + \tilde{\eta}_i$ where
    $\tilde{\eta}_i$ is zero mean random noise
    \STATE Compute policy Jacobian
    $\Psi_i = \nabla_\theta \pi(\theta_i, s_t)$
    \STATE Compute gradient estimate $g_i = H\Psi_i\frac{p\tilde{Q}_i}{\delta}u$
    \STATE Update $\theta_{i+1} = \theta_i - \alpha g_i$
  \ENDFOR
\end{algorithmic}
\end{algorithm}
Similar to Section \ref{sec:parameter_space}, we will analyze the rate of convergence of Algorithm \ref{alg:random_search_action}
to a stationary point of the objective $J(\theta)$. The following
section will lay out the assumptions and  present the convergence
analysis.

\paragraph{Assumptions and Analysis}
\label{sec:assumptions_action}

The assumptions for policy search in action space are similar to the
assumptions in Section \ref{sec:assumptions_parameter}. We assume that
$J(\theta)$ is differentiable with respect to $\theta$ over the entire domain. We
also assume that $J(\theta)$ is $G$-lipschitz and $L$-smooth. In
addition to these assumptions, we will assume that the policy function
$\pi(\theta, s)$ is $K$-lipschitz in $\theta$ and the state-action
value function $Q^t_{\pi_\theta}(s, a)$ is $W$-lipschitz and $U$-smooth in $a$. Finally, we assume that the state-action value function $Q(s,a)$ is differentiable with respect to $a$ over the entire domain.  Note that the Lipschitz assumptions above on $J(\theta)$, $Q^{t}_{\pi_{\theta}}(s,a)$, and $\pi(\theta,s)$ are also used in the analysis of Deterministic policy gradient \citep{silver2014deterministic}. We need extra smoothness assumption to study the convergence of our algorithms. 

Note that the gradient estimate $g_i$ used in
Algorithm \ref{alg:random_search_action} is a biased estimate
of $\nabla_\theta J(\theta)$. We can show this by considering
\begin{align*}
  \mathbb{E}_i[g_i] = \mathbb{E}_t \mathbb{E}_{s_t \sim
  d_{\pi_{\theta_i}}^t}\left[H\nabla_\theta \pi(\theta_i, s_t) \mathbb{E}_{u
  \sim \mathbb{S}_p}\left[\frac{p\tilde{Q}_i}{\delta}u\right]\right]
\end{align*}
where $\mathbb{E}_i$ denotes expectation with respect to the randomness at iteration $i$.
From \citep{flaxman2005online}, we have that $\mathbb{E}[\frac{p\tilde{Q}_i}{\delta}u] = \nabla_a \mathbb{E}_{v
\sim \mathbb{B}_p}[Q_{\pi_{\theta_i}}^t(s_t, \pi(\theta_i, s_t) + \delta v)]$ so we can rewrite the above equation as
\begin{align*}
  \mathbb{E}[g_i]
  &= \sum_{t=0}^{H-1}\mathbb{E}_{s_t \sim
    d_{\pi_{\theta_i}}^t}\mathbb{E}_{v \sim
    \mathbb{B}_p}\left[\nabla_\theta \pi(\theta_i, s_t)\nabla_a
    Q_{\pi_{\theta_i}}^t(s_t, \pi(\theta_i, s_t) + \delta v)\right]
\end{align*}

Comparing the above expression with equation \ref{eq:dpg-gradient}, we can see that $g_i$ is not an unbiased
estimate of the gradient $\nabla_\theta J(\theta)$. We can also explicitly upper bound the variance of $g_i$ by $\mathbb{E}_i\|g_i\|_2^2$. Note that in the limit when $\delta\to 0$, $g_i$ becomes an unbiased estimate of $\nabla_{\theta}J(\theta)$, but the variance will approach to infinity. In our analysis, we explicitly tune $\delta$ to balance the bias and variance. 

\begin{theorem}
\label{theorem:action-convergence}
  Consider running Algorithm \ref{alg:random_search_action} for $T$
  steps where the objective $J(\theta)$ satisfies the assumptions stated above. Then, we have 
  \begin{equation}
    \label{eq:action-convergence}
    \frac{1}{T}\sum_{i=1}^T \mathbb{E}\|\nabla_\theta
    J(\theta_i)\|_2^2 \leq \mathcal{O}(T^{-\frac{1}{4}}Hp^{\frac{1}{2}}(\Qbound^3 + \sigma^2\Qbound)^{\frac{1}{4}})
  \end{equation}
  where $J(\theta)\leq \Qbound$ for all $\theta \in \Theta$ and $\sigma^2$ is the variance of the random noise $\tilde{\eta}$ in Algorithm \ref{alg:random_search_action}.
\end{theorem}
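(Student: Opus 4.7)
\textbf{Proof plan for Theorem \ref{theorem:action-convergence}.}
The plan is to run a standard descent-lemma analysis for non-convex SGD, but treating $g_i$ as a \emph{biased} stochastic gradient and carefully tuning $\delta$ to balance the bias (in $\delta$) against the variance (in $1/\delta^2$). Starting from $L$-smoothness of $J$, the one-step inequality is
\begin{equation*}
J(\theta_{i+1}) \le J(\theta_i) - \alpha\langle \nabla J(\theta_i), g_i\rangle + \tfrac{L\alpha^2}{2}\|g_i\|_2^2.
\end{equation*}
Taking conditional expectation at iteration $i$ and writing $b_i := \mathbb{E}_i[g_i] - \nabla J(\theta_i)$, I use the identity $\langle \nabla J(\theta_i),\mathbb{E}_i[g_i]\rangle = \|\nabla J(\theta_i)\|_2^2 + \langle \nabla J(\theta_i), b_i\rangle$ together with Cauchy--Schwarz and the $G$-Lipschitzness of $J$ to bound $\langle\nabla J(\theta_i),\mathbb{E}_i[g_i]\rangle \ge \|\nabla J(\theta_i)\|_2^2 - G\|b_i\|_2$. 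The key point is to use $\|\nabla J\|\le G$ rather than a Young's-inequality split, because that is what yields the claimed $T^{-1/4}$ rate (a Young split would instead yield $T^{-1/3}$).

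Next I bound the bias and second moment separately. For the bias, the paper already derives $\mathbb{E}_i[g_i] = \sum_{t=0}^{H-1}\mathbb{E}_{s_t\sim d^t_{\pi_{\theta_i}}}[\nabla_\theta\pi(\theta_i,s_t)\,\nabla_a\hat{Q}^t_{\pi_{\theta_i}}(s_t,\pi(\theta_i,s_t))]$ where $\hat Q$ denotes the $\delta$-smoothed state--action value; comparing with the deterministic policy gradient expression (\ref{eq:dpg-gradient}) and invoking the $U$-smoothness of $Q$ in $a$ and the $K$-Lipschitzness of $\pi$ in $\theta$ (which bounds $\|\nabla_\theta \pi\|_{\mathrm{op}}\le K$) yields
\begin{equation*}
\|b_i\|_2 \;\le\; \sum_{t=0}^{H-1} K\cdot U\delta \;=\; HKU\delta.
\end{equation*}
For the second moment, using the definition of $g_i$, $\|\Psi_i\|_{\mathrm{op}}\le K$, $\|u\|_2=1$, and $\mathbb{E}[\tilde Q_i^2]\le \mathcal{Q}^2 + \sigma^2$, a direct computation gives $\mathbb{E}_i\|g_i\|_2^2 \le H^2K^2p^2(\mathcal{Q}^2+\sigma^2)/\delta^2$ (the extra $H^2$ coming from the uniform-in-$t$ sampling combined with multiplication by $H$).

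Plugging these bounds into the descent inequality, summing $i=1,\dots,T$, telescoping using $J(\theta_1)-\min J \le \mathcal{Q}$, and dividing by $\alpha T$ produces
\begin{equation*}
\frac{1}{T}\sum_{i=1}^T \mathbb{E}\|\nabla J(\theta_i)\|_2^2 \;\le\; \frac{\mathcal{Q}}{\alpha T} + GHKU\delta + \tfrac{L\alpha}{2}\cdot \frac{H^2K^2p^2(\mathcal{Q}^2+\sigma^2)}{\delta^2}.
\end{equation*}
The final step is a two-variable optimization: setting $\partial_\alpha$ and $\partial_\delta$ to zero (or equivalently solving $1/(\alpha T)\sim \delta \sim \alpha/\delta^2$) gives $\alpha \propto T^{-3/4}$ and $\delta \propto T^{-1/4}$, at which the three terms all balance at order $T^{-1/4}\,\mathcal{Q}^{1/4}(\mathcal{Q}^2+\sigma^2)^{1/4}Hp^{1/2}$ times problem-dependent constants $G^{1/2}K U^{1/2}L^{1/4}$, which is exactly $\mathcal{O}(T^{-1/4}Hp^{1/2}(\mathcal{Q}^3+\sigma^2\mathcal{Q})^{1/4})$.

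The main obstacle will be the bias step: one has to recognize that the random-horizon estimator yields the smoothed-in-action gradient of $Q$ summed over $t$, and that only $U$-smoothness of $Q$ in $a$ (not Lipschitzness) turns the $O(\delta)$ action perturbation into an $O(\delta)$ error on the \emph{gradient}. A secondary but important subtlety is choosing to bound $\langle\nabla J,b_i\rangle$ via $G\|b_i\|$ instead of splitting with Young's inequality; the latter forces $\|b_i\|^2$ into the bound and degrades the rate to $T^{-1/3}$, so the Lipschitz assumption on $J$ is genuinely used to obtain the advertised rate. Finally, tracking all the $H$ factors (one from the $H$ multiplier in line 9, another $H$ squared in the variance from the uniform-in-$t$ sampling, and one $H$ per term in the bias sum) is routine but must be done carefully to match the stated horizon dependence.
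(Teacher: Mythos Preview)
Your proposal is correct and follows essentially the same approach as the paper: descent lemma from $L$-smoothness, decomposition of $g_i$ into bias $b_i$ (bounded by $HKU\delta$ via $U$-smoothness of $Q$ and $K$-Lipschitzness of $\pi$) and second moment (bounded by $H^2K^2p^2(\mathcal{Q}^2+\sigma^2)/\delta^2$), handling the cross term via $\|\nabla J\|\le G$, and the same $\alpha\propto T^{-3/4}$, $\delta\propto T^{-1/4}$ balancing. Your version is in fact slightly cleaner than the paper's, which decomposes $\|g_i\|^2$ through the variance first and bounds the perturbed $Q$ by $\mathcal{Q}+W\delta$ rather than directly by $\mathcal{Q}$, producing a few extra lower-order terms that are ultimately absorbed into the $\mathcal{O}(\cdot)$.
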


The above theorem gives us a convergence rate to a stationary point of
$J(\theta)$ for policy search in action space. This means that to
reach a point $\theta$ where $\mathbb{E}\|\nabla_\theta
J(\theta)\|_2^2 \leq \epsilon$, policy search in action space needs at
most $\mathcal{O}\left( \frac{p^2H^4}{\epsilon^4} (\Qbound^3 + \sigma^2\Qbound) \right)$
samples. Interestingly, the convergence rate has a
dependence on the horizon length $H$, unlike policy search in
parameter space.
Also, observe that the 
convergence rate has no dependence on the parameter dimensionality $d$ as we have complete
knowledge of the Jacobian of policy, and  we have a dependence
on stochasticity of the environment $\sigma$ that slows down the convergence as the stochasticity 
increases, similar to policy search in parameter space.

\section{Experiments}
\label{sec:experiments}
Given the analysis presented in the previous sections, we test the
convergence properties of parameter and action space policy search
approaches across several experiments: Contextual Bandit with rich observations, Linear Regression, RL
benchmark tasks and Linear Quadratic Regulator (LQR). We use Augmented Random Search (ARS), from 
\citep{mania2018simple}, as the policy search in parameter space
method in our experiments as it has been empirically shown to be
effective in RL tasks. For policy search in action space, we use
either REINFORCE 
\citep{williams1992simple}, or ExAct (Exploration in Action Space), the method described by
Algorithm \ref{alg:random_search_action}.
In all the plots shown, solid lines
represent the mean estimate over $10$ random seeds and shaded regions
correspond to $\pm 1$ standard error.
The code for all our experiments can be found here\footnote{\url{https://github.com/LAIRLAB/contrasting_exploration_rl}}\footnote{\url{https://github.com/LAIRLAB/ARS-experiments}}.

\subsection{One-Step Control}
\label{sec:one-step-control}

In these sets of experiments, we test the convergence rate of policy
search methods for one time-step prediction. The objective is to minimize the instantaneous cost incurred. The motivation behind such experiments is that we want to
understand the dependence of different policy search methods on
parametric dimensionality $d$ without the effect of horizon length
$H$.


\paragraph{MNIST as a Contextual Bandit} Our first set of experiments is the MNIST digit recognition task 
\citep{lecun1998gradient}. To formulate the task in an RL framework, we
consider a sequential decision making problem where at each time-step
the agent is given the features of the image and needs to predict one
of ten actions (corresponding to digits). A reward of $+1$
is given for predicting the correct digit, and a reward of $-1$ for
an incorrect prediction. With this reduction, the problem is essentially a Contextual Bandit Problem \citep{agarwal2014taming}. We use a standard LeNet-style convolutional architecture,
\citep{lecun1998gradient}, with $d=21840$ trainable parameters.
\begin{figure}[t]
  \centering
  \includegraphics[width=0.5\linewidth]{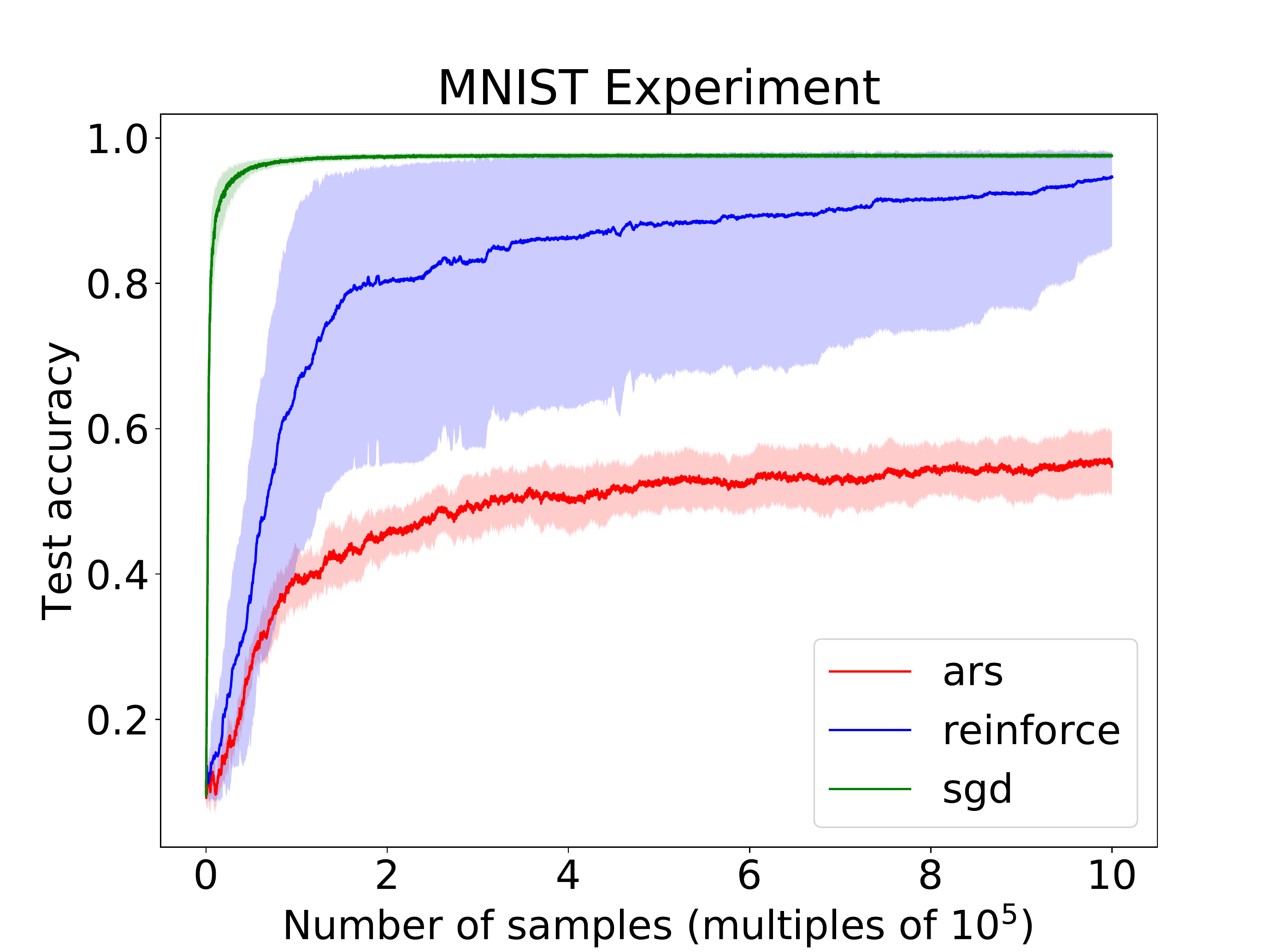}
  \caption{Mean test accuracy with standard error for different
    approaches against number of samples}
  \label{fig:mnist}
\end{figure}
Figure \ref{fig:mnist} shows the learning curves for SGD under standard full-information
supervised learning setting with cross entropy loss, REINFORCE and ARS. 
We can
observe that in this setting where the parameter space dimensionality
$d$ significantly exceeds the action space dimensionality $p = 1$, policy search in
action space outperforms parameter space methods.


\paragraph{Linear Regression with Partial Information} These set of experiments are designed to understand how the sample
complexity of different policy search methods vary as the parametric
complexity is varied. More specifically, from our analysis in Section \ref{sec:olr}, we know
that sample complexity of parameter space methods have a dependence on
$d$, the parametric complexity, whereas action space methods have no
dependence on $d$. We test this hypothesis in this experiment using artificial data
with varying input dimensionality and output scalar values.
\begin{figure*}[t]
  \centering
  \begin{subfigure}{\includegraphics[width=0.32\linewidth]{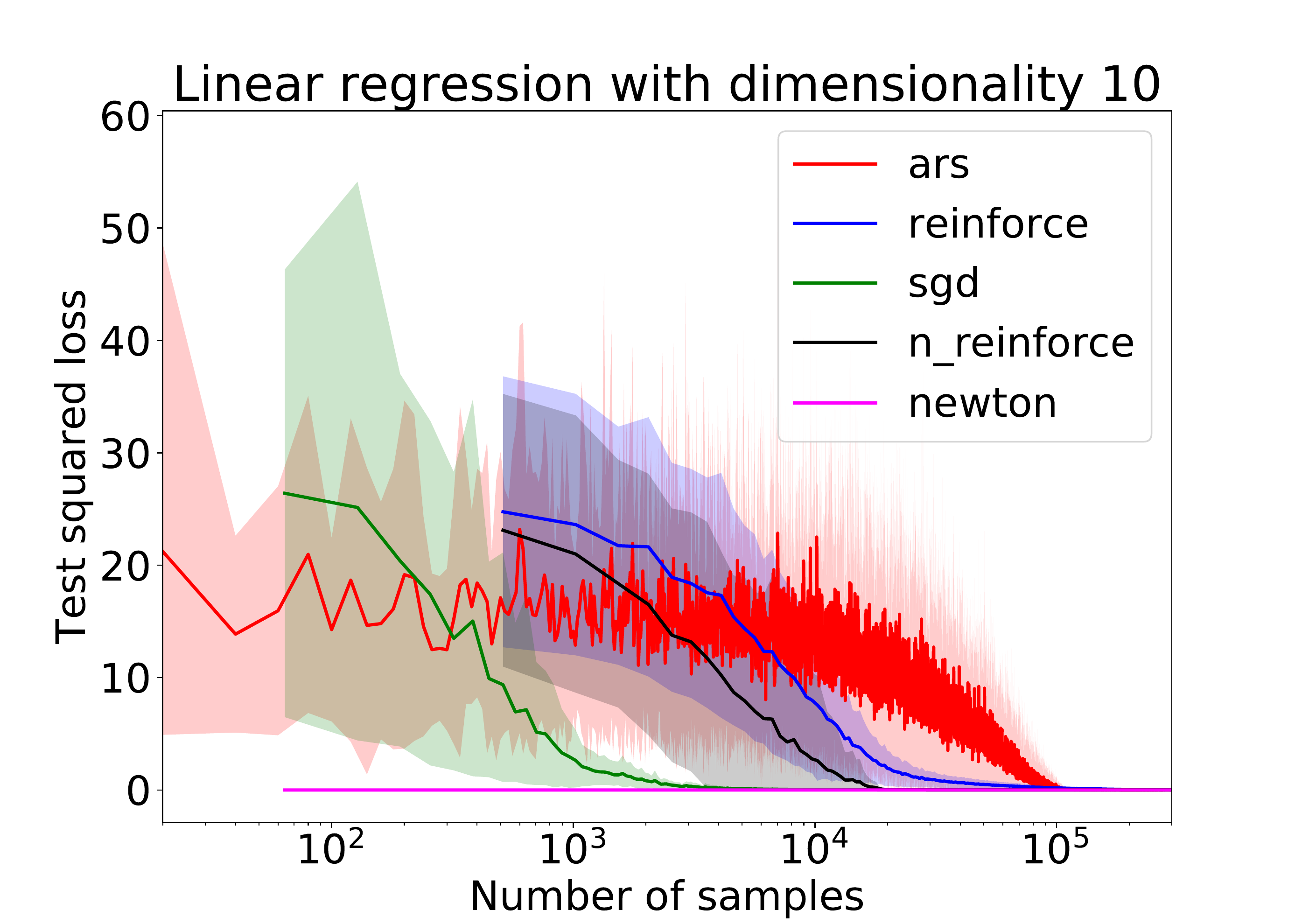}\label{fig:lin10}}\end{subfigure}
  \begin{subfigure}{\includegraphics[width=0.32\linewidth]{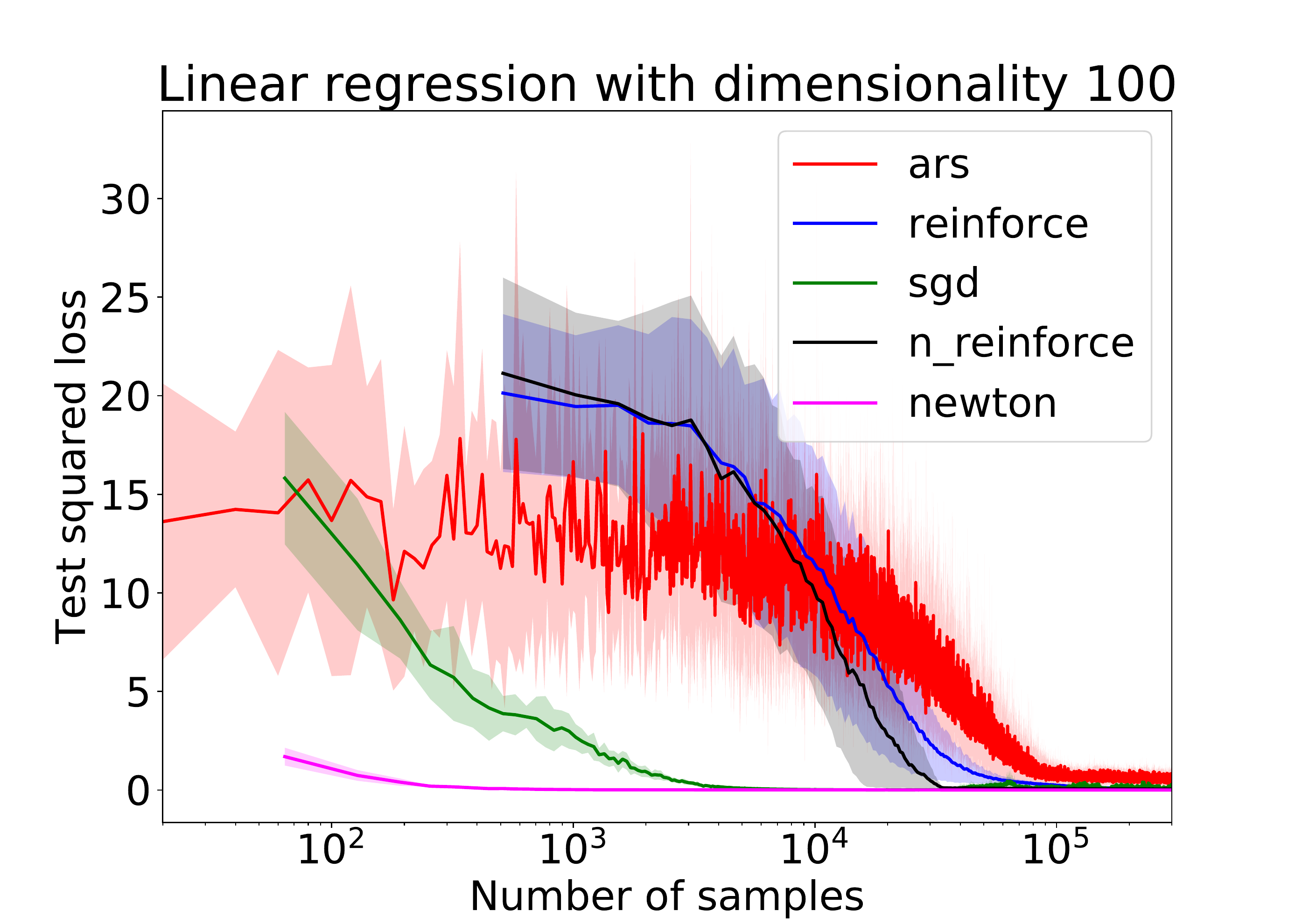}\label{fig:lin100}}\end{subfigure}
  \begin{subfigure}{\includegraphics[width=0.32\linewidth]{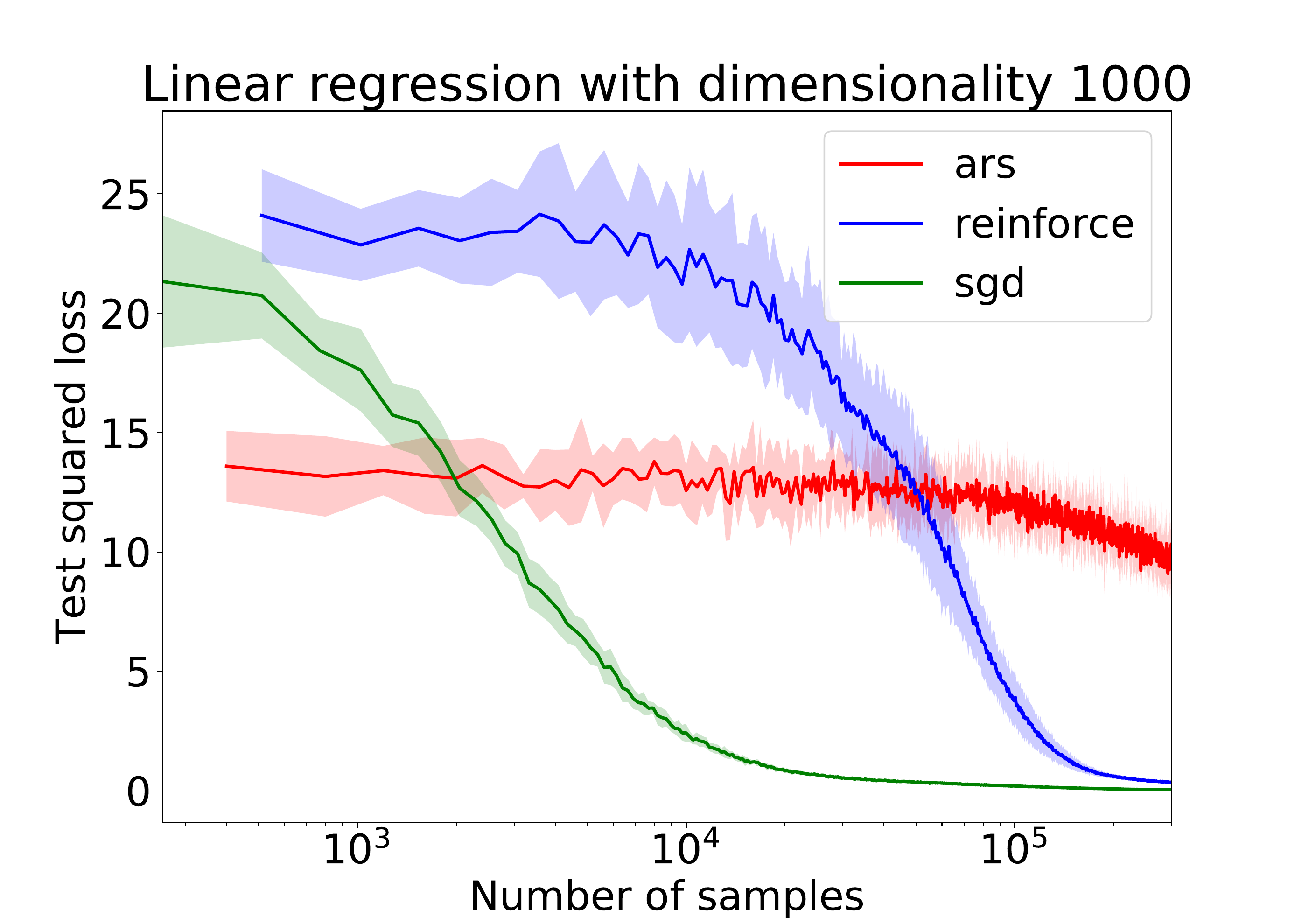}\label{fig:lin1000}}\end{subfigure}
  \caption{Linear Regression Experiments with varying input
    dimensionality}
  \label{fig:linear}
\end{figure*}
Figure \ref{fig:linear} shows the learning curves for standard full-information
supervised learning approaches with full access to the square loss (SGD \& Newton), REINFORCE, natural REINFORCE
\citep{kakade2002natural}, and ARS as we increase the 
input dimensionality, and hence parametric dimensionality $d$. 
Note that we have not included natural REINFORCE and Newton method in Figure \ref{fig:lin1000} as extensive hyperparameter search for these methods is computationally expensive in such high dimensionality settings.
The learning curves in Figure \ref{fig:linear}
match our expectations, and show that action space policy
search methods do not degrade as parametric dimensionality increases
whereas parameter space methods do. Moreover, action space methods lie
between the curves of supervised learning and parameter space
methods as they take advantage of the Jacobian of the policy and
learn  more quickly than parameter space methods.

\subsection{Multi-Step Control}
\label{sec:multi-step-control}

The above experiments provide insights on the dependence of
policy search methods on parametric
dimensionality $d$. We now shift our focus on to the dependence on horizon
length $H$.
In this set of experiments, we extend the time horizon and test the convergence rate of
policy search methods for multi-step control. The objective is to
minimize the sum of costs incurred over a horizon $H$, i.e. $J(\theta)
= \mathbb{E}[\sum_{t=1}^T c(s_t, a_t)]$. According to our analysis, we
expect action space policy search methods to have a dependence
on the horizon length $H$.

\begin{figure*}[t]
  \centering
  \begin{subfigure}{\includegraphics[width=0.32\linewidth]{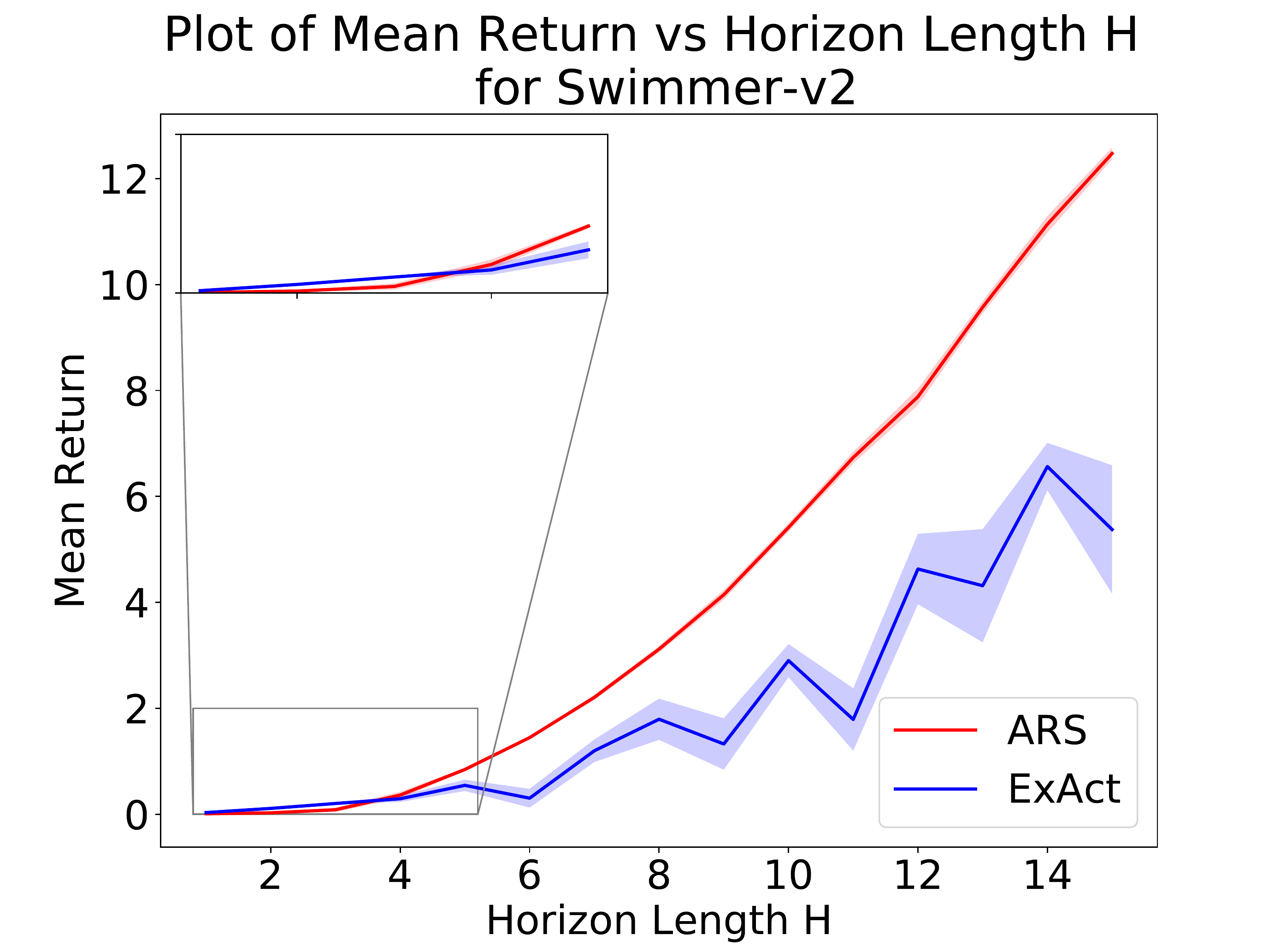}\label{fig:swimmer}}\end{subfigure}
  \begin{subfigure}{\includegraphics[width=0.32\linewidth]{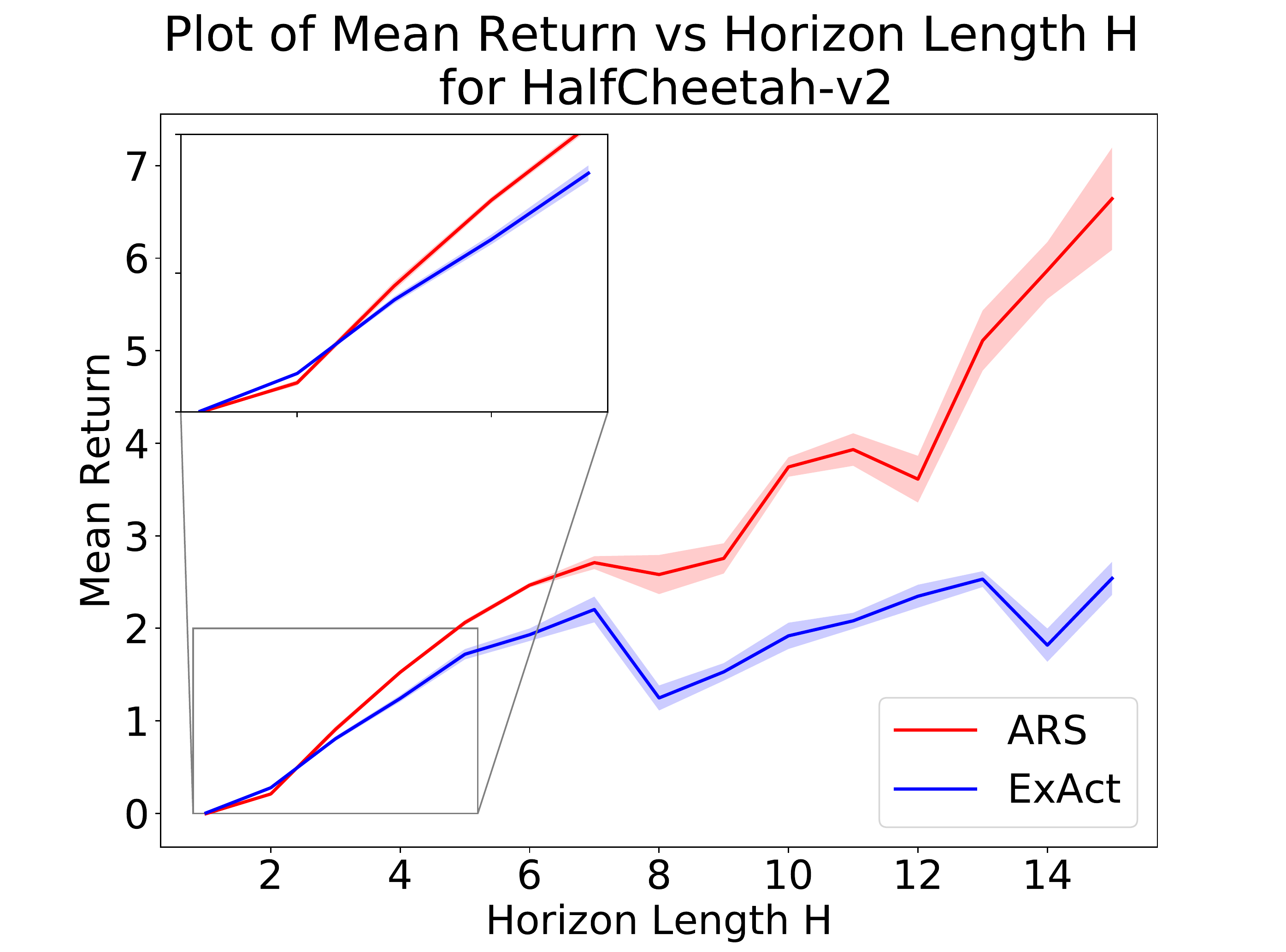}\label{fig:halfcheetah}}\end{subfigure}
  \begin{subfigure}{\includegraphics[width=0.32\linewidth]{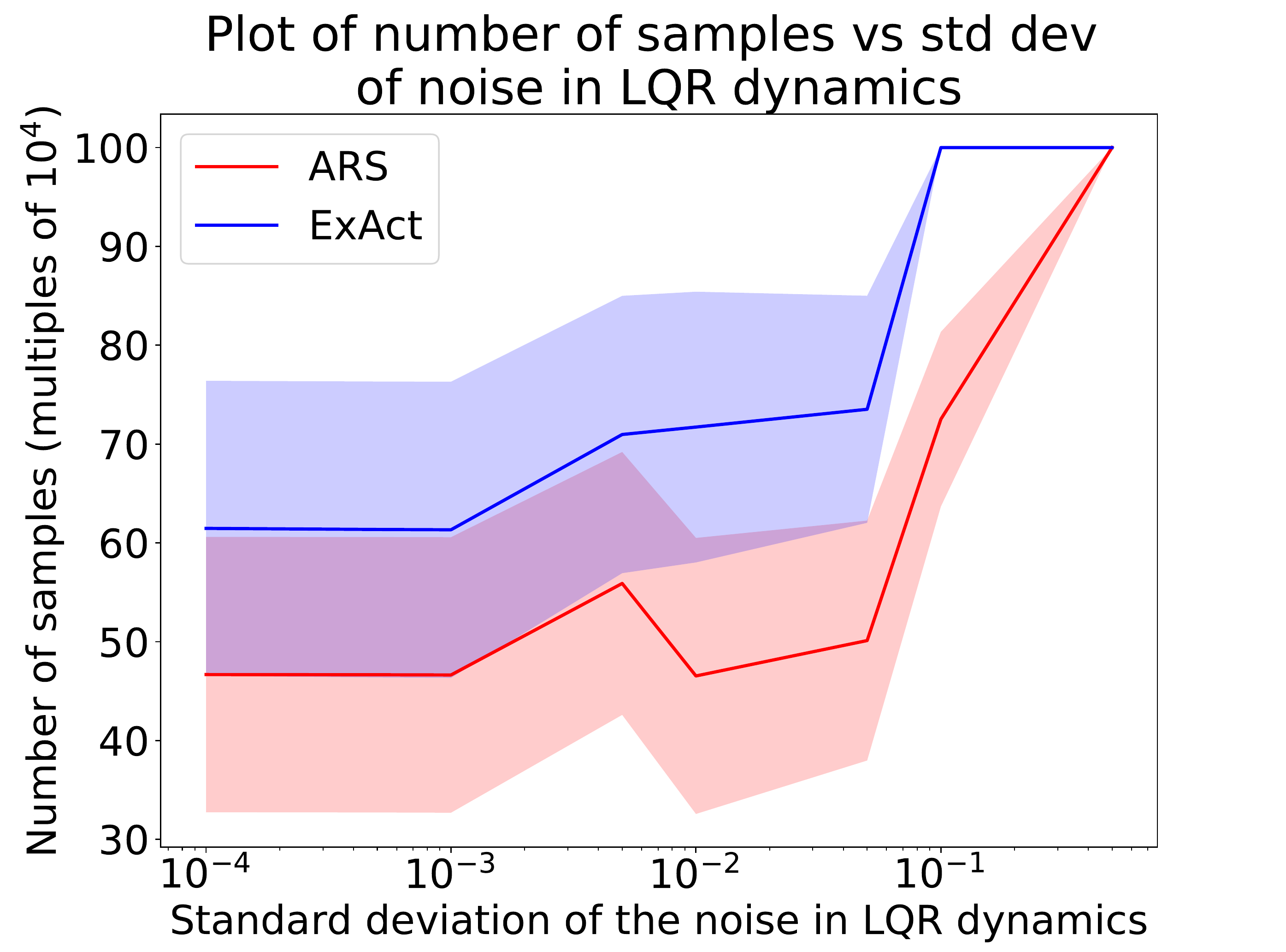}\label{fig:lqr}}\end{subfigure}
  \caption{Multi-step Control. Figures \ref{fig:swimmer} and \ref{fig:halfcheetah} show performance of different methods as horizon length varies. Figure \ref{fig:lqr} shows number of samples needed to reach close to a stationary point as noise in dynamics varies}
  \label{fig:multistep}
\end{figure*}



We test ARS and ExAct on two popular continuous control simulated
benchmark tasks in OpenAI gym \citep{openaigym}: Swimmer and
HalfCheetah. We chose these two environments as they allow you to vary
the horizon length $H$ without terminating the task early. For both
tasks, we use linear policies as they have been shown to be very
effective in \citep{mania2018simple, rajeswaran2017towards}. Swimmer
has an observation space dimensionality of $d = 8$ and a continuous
action space of dimensionality $p = 2$. Similarly, for HalfCheetah
$d=17$ and $p=6$. Figures \ref{fig:swimmer} and \ref{fig:halfcheetah} show the
performance of both approaches in terms of the mean return $J(\theta)$ (expected sum of
rewards) they obtain as the horizon length $H$ varies. 
Note that both
approaches are given access to the same number of samples $10^4 \times
H$ from the environments for each horizon length $H$. In the regime of
short horizon lengths, action space methods are better than parameter
space methods as they do not have a dependence on parametric
complexity $d$. However, as the horizon length increases, parameter
space methods start outperforming action space methods handily as they
do not have an explicit dependence on the horizon length, as pointed out
by our analysis. We have observed the same trend of parameter space
methods handily outperforming action space methods as far as $H =
200$ and expect this trend to continue beyond. This empirical insight
combined with our analysis presented in 
Sections \ref{sec:parameter_space}, \ref{sec:action_space} explains
why ARS, a simple parameter space search method, outperformed
state-of-the-art actor critic action space search methods in
\citep{mania2018simple} on OpenAI gym benchmarks where the horizon
length $H$ is typically as high as $1000$. 
\paragraph{Effect of environment stochasticity}
\label{sec:effect-stoch-param}

In this final set of experiments, we set out to understand the effect
of stochasticity in environment dynamics on the performance of
policy search methods. As our analysis in Sections
\ref{sec:parameter_space} and \ref{sec:action_space} points out, the
stochasticity of the environment plays an important role in
controlling the variance of our gradient estimates in zeroth order
optimization procedures. To empirically observe this, we use a
stochastic LQR environment where we have access to the true cost
function $c$ and hence, can compute the gradient $\nabla_\theta
J(\theta)$ exactly. Given access to such information, we vary the
standard deviation $\sigma$ of the noise in LQR dynamics and observe the number of samples needed for
ARS to reach $\theta$ such that
$\|\nabla_\theta J(\theta)\|_2^2 \leq 0.05$.
Figure \ref{fig:lqr} presents the number of samples needed to reach
close to a stationary point of $J(\theta)$ as the standard deviation
of noise in LQR dynamics varies. 
Note that we limit the maximum number of
samples to $10^6$ for each run. The results match our expectations from
the analysis, where we observed
that as the stochasticity of the environment increases, convergence
rate of policy search methods slows down.

\section{Conclusion}
\label{sec:conclusion}

Parameter space exploration via black-box optimization methods have
often been shown to outperform sophisticated action space exploration approaches for the reinforcement learning problem. Our work highlights the major difference between parameter and action space exploration methods: the latter leverages 
Jacobian of the parameterized policy. This allows sample complexity of action space exploration methods to be independent of parameter space dimensionality and only dependent on the dimensionality of action space and horizon length. For domains where the action space dimensionality and horizon length are small and the dimensionality of parameter space is large,
we conclude that exploration in action space should be preferred. On
the other hand, for long horizon control problems with low dimensional
policy parameterization, exploration in parameter space will
outperform exploration in action space.

\paragraph{Acknowledgements}
The authors would like to thank the anonymous reviewers for their
useful comments, the entire LairLab for stimulating discussions and
Ben Recht for his interesting blog posts.


\bibliography{ref}

\appendix

\newpage
\onecolumn
\section{Proof of Theorem~\ref{thm:online_linear_regression}}
\label{sec:proofs_bandit}
\begin{proof}[Proof of Theorem~\ref{thm:online_linear_regression}]

To prove Eq.~\ref{eq:random_para} for Alg.~\ref{alg:random_search_OLR}, we use the proof techniques from \cite{flaxman2005online}. The proof is more simpler than the one in \cite{flaxman2005online} as we do not have to deal with shrinking and reshaping the predictor set ${\Theta}$.

Denote $u\sim \mathbb{B}_b$ as uniformly sampling $u$ from a $b$-dim unit ball, $u\sim\mathbb{S}_b$ as uniformly sampling $u$ from the $b$-dim unit sphere, and $\delta \in (0,1)$. Consider the loss function $\hat{c}_i(w_i) = \mathbb{E}_{v\sim \mathbb{B}_b}[c_i(\theta_i + \delta v)]$, which is a smoothed version of $c_i(w_i)$. It is shown in \cite{flaxman2005online} that the gradient of $\hat{c}_i$ with respect to $\theta$ is:
\begin{align*}
   &\nabla_{\theta}\hat{c}_i(\theta)|_{\theta = \theta_i} \\
   &= \frac{b}{\delta} \mathbb{E}_{u\sim\mathbb{S}_b}[c_i(\theta_i +\delta u)u]\\
   &= \frac{b}{\delta}\mathbb{E}_{u\sim \mathbb{S}_b}[((\theta_i +\delta u)^T s_i - a_i)^2 u].
\end{align*} Hence, the descent direction we take in Alg.~\ref{alg:random_search_OLR} is actually an unbiased estimate of $\nabla_{\theta}\hat{c}_i(\theta)|_{\theta=\theta_i}$. So Alg.~\ref{alg:random_search_OLR} can be considered as running OGD with an unbiased estimate of gradient on the sequence of loss $\hat{c}_i(\theta_i)$. It is not hard to show that for an unbiased estimate of $\nabla_{\theta}\hat{c}_i(\theta)|_{\theta=\theta_i}$ = $\frac{b}{\delta} ((\theta_i + \delta u)^T s_i - a_i)^2 u$, the norm is bounded as $b(C^2 + C_{s}^2)/\delta$. Now we can directly applying Lemma 3.1 from \cite{flaxman2005online}, to get:
\begin{align}
\label{eq:regret_on_surrogate}
   \mathbb{E}\left[\sum_{i=1}^T \hat{c}_i(\theta_i)\right] - \min_{\theta^\star\in{\Theta}}\sum_{i=1}^T \hat{c}_i(\theta^\star) \leq \frac{C_{\theta}b(C^2+C_{s}^2)}{\delta}\sqrt{T}. 
\end{align} We can bound the difference between $\hat{c}_i(\theta)$ and ${c}_i(\theta)$ using the Lipschitiz continuous property of $c_i$:
\begin{align}
|\hat{c}_i(\theta) - c_i(\theta) | & = |\mathbb{E}_{v\sim \mathbb{B}_b}[c_i(\theta+\delta v) - c_i(\theta)]| \nonumber\\
&\leq \mathbb{E}_{v\sim \mathbb{B}_b}[|c_i(\theta+\delta v) - c_i(\theta)|] \leq L\delta. 
\end{align} Substitute the above inequality back to Eq.~\ref{eq:regret_on_surrogate}, rearrange terms, we get:
\begin{align}
&\mathbb{E}\left[ \sum_{i=1}^T c_i(\theta_i)  \right]  - \min_{\theta^\star\in{\Theta}} \sum_{i=1}^T c_i(w^\star)\leq \frac{C_{\theta}b(C^2+C_{s}^2)}{\delta}\sqrt{T} + 2LT\delta.
\end{align} By setting $\delta = T^{-0.25}\sqrt{\frac{C_{\theta}b(C^2+C_{s}^2)}{2L}}$, we get:
\begin{align*}
   &\mathbb{E}\left[ \sum_{i=1}^T c_i(\theta_i)  \right]  - \min_{w^\star\in{\Theta}} \sum_{i=1}^T c_i(w^\star) \leq \sqrt{C_{\theta}b(C^2+C_{s}^2)L} T^{3/4}. 
\end{align*}

To prove Eq.~\ref{eq:random_action} for Alg.~\ref{alg:random_search_action}, we follow the similar strategy in the proof of Alg.~\ref{alg:random_search_OLR}.

Denote $\epsilon \sim [-1,1]$ as uniformly sampling $\epsilon$ from the interval $[-1,1]$, $e\sim \{-1,1\}$ as uniformly sampling $e$ from the set containing $-1$ and $1$. Consider the loss function $\tilde{c}_i(\theta) = \mathbb{E}_{\epsilon\sim [-1,1]}[(\theta^T s_i + \delta \epsilon - a_i)^2]$. One can show that the gradient of $\tilde{c}_i(\theta)$ with respect to $\theta$ is:
\begin{align}
    \nabla_{\theta}\tilde{c}_i(\theta) = \frac{1}{\delta}\mathbb{E}_{e\sim \{-1,1\}}[e(\theta^{\top} s_i + \delta e - a_i)^2 s_i].
\end{align} As we can see that the descent direction we take in Alg.~\ref{alg:random_search_action} is actually an unbiased estimate of $\nabla_{\theta}\tilde{c}_i(\theta)|_{\theta=\theta_i}$. Hence Alg.~\ref{alg:random_search_action} can be considered as running OGD with unbiased estimates of gradients on the sequence of loss functions $\tilde{c}_i(\theta)$. For an unbiased estimate of the gradient, $\frac{1}{\delta} e(\theta_i^{\top} s_i +\delta e - a_i)^2 s_i$, its norm is bounded as $(C^2 + 1)C_{s}/\delta$. Note that different from Alg.~\ref{alg:random_search_OLR}, here the maximum norm of the unbiased gradient \emph{is independent of feature dimension $b$}. Now we apply Lemma 3.1 from \cite{flaxman2005online} on $\tilde{c}_i$, to get:
\begin{align}
\label{eq:tilde_random_action}
    \mathbb{E}\left[ \sum_{i=1}^T \tilde{c}_i(\theta_i)\right] - \min_{\theta^\star\in{\Theta}}\sum_{i=1}^T \tilde{c}_i(\theta^*) \leq \frac{C_{\theta}(C^2 + 1)C_{s}}{\delta}\sqrt{T}.
\end{align}
Again we can bound the difference between $\tilde{c}_i(\theta)$ and $c_i(\theta)$ for any $\theta$ using the fact that $(\hat{a}_i - a_i)^2$ is Lipschitz continuous with respect to prediction $\hat{a}_i$ with Lipschitz constant $C$: 
\begin{align}
    |\tilde{c}_i(\theta) - c_i(\theta)| &= |\mathbb{E}_{\epsilon\sim [-1,1]} [(\theta^{\top} s_i + \delta\epsilon - a_i)^2 - (\theta^{\top} s_i - a_i)^2]|  \nonumber\\
    &\leq \mathbb{E}_{\epsilon\sim [-1,-1]}[C\delta |\epsilon|] \leq C\delta.
\end{align} Substitute the above inequality back to Eq.~\ref{eq:tilde_random_action}, rearrange terms:
\begin{align*}
    &\mathbb{E}\left[\sum_{i=1}^T \tilde{c}_i(\theta_i)\right] - \min_{\theta^\star\in{\Theta}}\sum_{i=1}^T \tilde{c}_i(\theta^*)\leq \frac{C_{\theta}(C^2+1)C_{s}}{\delta}\sqrt{T} + 2C\delta T. 
\end{align*}
Set $\delta = T^{-0.25}\sqrt{\frac{C_{\theta}(C^2+1)C_{s}}{2C}}$, we get:
\begin{align*}
  &\mathbb{E}\left[\sum_{i=1}^T \tilde{c}_i(\theta_i)\right] - \min_{\theta^*\in{\Theta}}\sum_{i=1}^T \tilde{c}_i(\theta^*)\leq \sqrt{C_{\theta}(C^2+1)C_{s}C}T^{3/4}.  
\end{align*}
\end{proof}

\section{Proof of Theorem~\ref{theorem:parameter-convergence}}
\label{sec:proofs_RL}

We first present some useful lemmas below.

Consider the smoothed objective given by $\hat{J}(\theta) =
\mathbb{E}_{v \sim \mathbb{B}_d}[J(\theta + \delta v)]$ where
$\mathbb{B}_d$ is the unit ball in $d$ dimensions and $\delta$ is a
positive constant. Using the assumptions stated in Section
\ref{sec:assumptions_parameter}, we obtain the following useful lemma:
\begin{lemma}
  \label{lemma:grad-diff-parameter}
  If the objective $J(\theta)$ satisfies the assumptions in Section
  \ref{sec:assumptions_parameter} and the smoothed objective
  $\hat{J}(\theta)$
  is as given above,
  then we have that
  \begin{enumerate}
  \item $\hat{J}(\theta)$ is also $G$-Lipschitz and $L$-smooth
  \item For all $\theta \in \mathbb{R}^d$, $\|\nabla_\theta J(\theta)
    - \nabla_\theta \hat{J}(\theta)\| \leq L\delta$
  \end{enumerate}
\end{lemma}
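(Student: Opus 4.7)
The plan is to exploit the fact that $\hat{J}$ is defined as an expectation of translates of $J$, so inequalities that hold pointwise for $J$ and $\nabla J$ lift to $\hat{J}$ and $\nabla \hat{J}$ by Jensen's inequality and linearity. The first step is to justify that $\nabla_\theta \hat{J}(\theta) = \mathbb{E}_{v\sim\mathbb{B}_d}[\nabla_\theta J(\theta+\delta v)]$. Since $J$ is differentiable everywhere and $\nabla J$ is bounded by $G$ (from $G$-Lipschitzness), the dominated convergence theorem lets us interchange differentiation and the finite-measure expectation over the unit ball.

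To establish the $G$-Lipschitz property of $\hat{J}$, I would write
\begin{align*}
|\hat{J}(\theta_1)-\hat{J}(\theta_2)|
&= \bigl|\mathbb{E}_{v\sim\mathbb{B}_d}[J(\theta_1+\delta v)-J(\theta_2+\delta v)]\bigr| \\
&\leq \mathbb{E}_{v\sim\mathbb{B}_d}\bigl[|J(\theta_1+\delta v)-J(\theta_2+\delta v)|\bigr]
\leq G\|\theta_1-\theta_2\|,
\end{align*}
where the first inequality is Jensen's and the second uses the $G$-Lipschitz property of $J$ (noting the translation $\delta v$ cancels). An identical argument applied to $\nabla \hat{J} = \mathbb{E}_v[\nabla J(\cdot + \delta v)]$ together with the $L$-smoothness of $J$ yields $\|\nabla\hat{J}(\theta_1)-\nabla\hat{J}(\theta_2)\|\leq L\|\theta_1-\theta_2\|$, completing part~(1).

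For part~(2), I would write $\nabla_\theta J(\theta) = \mathbb{E}_{v\sim\mathbb{B}_d}[\nabla_\theta J(\theta)]$ (the integrand is constant in $v$) and then use the interchange from part~(1):
\begin{align*}
\|\nabla_\theta J(\theta)-\nabla_\theta \hat{J}(\theta)\|
&= \bigl\|\mathbb{E}_{v\sim\mathbb{B}_d}[\nabla_\theta J(\theta) - \nabla_\theta J(\theta+\delta v)]\bigr\| \\
&\leq \mathbb{E}_{v\sim\mathbb{B}_d}\bigl[\|\nabla_\theta J(\theta)-\nabla_\theta J(\theta+\delta v)\|\bigr]
\leq \mathbb{E}_{v\sim\mathbb{B}_d}[L\delta\|v\|]\leq L\delta,
\end{align*}
where the penultimate inequality invokes $L$-smoothness of $J$ and the last uses $\|v\|\leq 1$ for $v\in\mathbb{B}_d$.

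There is no real obstacle here; the only subtlety is the justification of the differentiation-under-the-integral step, which is immediate from the boundedness of $\nabla J$ together with the finiteness of the uniform measure on $\mathbb{B}_d$. Everything else is Jensen's inequality combined with the assumed regularity of $J$ transported inside the expectation.
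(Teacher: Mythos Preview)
Your proposal is correct and follows essentially the same approach as the paper: both establish part~(1) by pulling the absolute value (respectively norm) inside the expectation via Jensen and then applying the pointwise Lipschitz/smoothness of $J$, and both establish part~(2) by writing $\nabla_\theta \hat{J}(\theta)=\mathbb{E}_{v\sim\mathbb{B}_d}[\nabla_\theta J(\theta+\delta v)]$ (justified by dominated convergence), applying Jensen, and then $L$-smoothness together with $\|v\|\leq 1$.
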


\begin{proof}[Proof of Lemma \ref{lemma:grad-diff-parameter}]
  Consider for any $\theta_1, \theta_2 \in \mathbb{R}^d$,
\begin{align*}
    |\hat{J}(\theta_1) - \hat{J}(\theta_2)| &= |\mathbb{E}_{v \sim \mathbb{B}_d}[J(\theta_1+\delta v) - J(\theta_2 + \delta v)]| \nonumber \\
    &\leq \mathbb{E}_{v \sim \mathbb{B}_d}[|J(\theta_1+\delta v) - J(\theta_2 + \delta v)|] \nonumber \\
    &\leq \mathbb{E}_{v \sim \mathbb{B}_d}[G\|\theta_1 - \theta_2\|] \nonumber \\
    &= G\|\theta_1 - \theta_2\|
\end{align*}
The above inequalities are due to the fact that expectation of absolute value is greater than absolute value of expectation, and the $G$-lipschitz assumption on $J(\theta)$. Thus, the smoothened loss function $\hat{J}(\theta)$ is also $G$-lipschitz. Similarly consider,
\begin{align*}
  \|\nabla_\theta\hat{J}&(\theta_1) - \nabla_\theta\hat{J}(\theta_2)\| \\
  &= \|\nabla_\theta \mathbb{E}_{v \sim \mathbb{B}_d}[J(\theta_1 + \delta v)] - \nabla_\theta \mathbb{E}_{v \sim \mathbb{B}_d}[J(\theta_2+\delta v)]\| \nonumber \\
    &= \|\mathbb{E}_{v \sim \mathbb{B}_d}[\nabla_\theta J(\theta_1+\delta v)
      - \nabla_\theta J(\theta_2 + \delta v)]\| \nonumber \\
    &\leq \mathbb{E}_{v \sim \mathbb{B}_d}[\|\nabla_\theta J(\theta_1+\delta
      v) - \nabla_\theta J(\theta_2 + \delta v)\|] \nonumber \\
    &\leq \mathbb{E}_{v \sim \mathbb{B}_d}[L\|\theta_1 - \theta_2\|] \nonumber \\
    &= L\|\theta_1 - \theta_2\|
\end{align*}
The above inequalities are due to the fact that expectation of norm is
greater than norm of expectation, and the $L$-smoothness assumption on
$J(\theta_1)$. We interchange the expectation and derivative using the
assumptions on $J(\theta_1)$ and the dominated convergence
theorem. Thus, the smoothened loss function $\hat{J}(\theta_1)$ is
also $L$-smooth.

We know,
  \begin{align*}
    \nabla_\theta \hat{J}(\theta) &= \nabla_\theta\mathbb{E}_{v \sim \mathbb{B}_d}[J(\theta +
                             \delta v)] \nonumber \\
    &= \mathbb{E}_{v \sim \mathbb{B}_d}[\nabla_\theta J(\theta + \delta v)]
  \end{align*}
  Note that the expectation and derivative can be interchanged using
  the dominated convergence theorem. Hence, we have
  \begin{align*}
    \|\nabla_\theta \hat{J}(\theta) - \nabla_\theta J(\theta)\| &= \|\mathbb{E}_{u \sim
                                                  \mathbb{B}_d}[\nabla_\theta
                                                  J(\theta + \delta v)]
                                                  - \nabla_\theta J(\theta)\|
                                                  \nonumber \\
                                                &\leq \mathbb{E}_{u \sim
                                                  \mathbb{B}_d}\|\nabla_\theta
                                                  J(\theta + \delta v) -
                                                  \nabla_\theta J(\theta)\|
                                                  \nonumber \\
                                                &\leq \mathbb{E}_{u
                                                  \sim \mathbb{B}_d}[L
                                                  ||\delta v||]
                                                  \nonumber \\
                                                &\leq L \delta
  \end{align*}
\end{proof}

The above lemma will be very useful later when we try to relate the
convergence rate for the smoothed objective and the true objective. It is shown in
\citep{flaxman2005online, agarwal2010optimal} that the gradient estimate $g_i$ is an
unbiased estimator of the gradient $\nabla_\theta
\hat{J}(\theta_i)$. Hence, Algorithm \ref{alg:random_search_parameter}
is performing SGD on the smoothed objective $\hat{J}(\theta)$. Using
this insight, we can use the convergence rate of SGD for nonconvex
functions to stationary points from \citep{ghadimi2013stochastic} which is given as
follows
\begin{lemma}[\citep{ghadimi2013stochastic}]
  \label{lemma:sgd-parameter}
  Consider running SGD on the objective $\hat{J}(\theta)$ that is
  $L$-smooth and $G$-Lipschitz for $T$ steps. Fix initial solution
  $\theta_0$ and denote $\Delta_0 = \hat{J}(\theta_0) -
  \hat{J}(\theta^*)$ where $\theta^*$ is the point at which
  $\hat{J}(\theta)$ attains global minimum. Also, assume that the
  gradient estimate $g_i$ is unbiased and has a bounded variance,
  i.e. for all $i$, $\mathbb{E}_i[\|g_i - \nabla_\theta
  \hat{J}(\theta_i)\|_2^2] \leq V \in \mathbb{R}^+$ where
  $\mathbb{E}_i$ denotes expectation with randomness only at iteration
  $i$ conditioned on history upto iteration $i-1$. Then we have,
  \begin{equation}
    \frac{1}{T} \sum_{i=1}^T \mathbb{E}\|\nabla_\theta
    \hat{J}(\theta_i)\|_2^2 \leq \frac{2\sqrt{2\Delta_0L(V+G^2)}}{\sqrt{T}}
  \end{equation}
\end{lemma}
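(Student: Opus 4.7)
The plan is to prove this via the standard descent-lemma argument for SGD on smooth nonconvex objectives, following \citep{ghadimi2013stochastic}. The core idea is that even though we lack unbiased function-value decrease per step, $L$-smoothness combined with unbiasedness of $g_i$ gives an inequality that telescopes to bound the averaged squared gradient norm. Concretely, I would first invoke $L$-smoothness of $\hat{J}$ at the iterates to obtain the quadratic upper bound
\begin{equation*}
\hat{J}(\theta_{i+1}) \;\leq\; \hat{J}(\theta_i) + \langle \nabla \hat{J}(\theta_i),\, \theta_{i+1} - \theta_i\rangle + \frac{L}{2}\|\theta_{i+1}-\theta_i\|_2^2,
\end{equation*}
then substitute the SGD update $\theta_{i+1} = \theta_i - \alpha g_i$ to get
\begin{equation*}
\hat{J}(\theta_{i+1}) \;\leq\; \hat{J}(\theta_i) - \alpha \langle \nabla \hat{J}(\theta_i), g_i\rangle + \frac{L\alpha^2}{2}\|g_i\|_2^2 .
\end{equation*}

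Next I would take conditional expectation $\mathbb{E}_i[\cdot]$. Unbiasedness of $g_i$ turns the inner product term into $-\alpha \|\nabla \hat{J}(\theta_i)\|_2^2$. For the squared-norm term I would use the bias-variance decomposition $\mathbb{E}_i\|g_i\|_2^2 = \|\nabla \hat{J}(\theta_i)\|_2^2 + \mathbb{E}_i\|g_i - \nabla \hat{J}(\theta_i)\|_2^2 \leq \|\nabla \hat{J}(\theta_i)\|_2^2 + V$, and then further bound $\|\nabla \hat{J}(\theta_i)\|_2^2 \leq G^2$ using the Lipschitz property from Lemma~\ref{lemma:grad-diff-parameter}, giving $\mathbb{E}_i\|g_i\|_2^2 \leq V + G^2$. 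This produces the per-step descent inequality
\begin{equation*}
\mathbb{E}_i[\hat{J}(\theta_{i+1})] \;\leq\; \hat{J}(\theta_i) - \alpha \|\nabla \hat{J}(\theta_i)\|_2^2 + \frac{L\alpha^2 (V+G^2)}{2} .
\end{equation*}

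Rearranging, summing from $i=1$ to $T$, taking a total expectation, and telescoping the function value differences (which collapse to $\hat{J}(\theta_0) - \mathbb{E}[\hat{J}(\theta_{T+1})] \leq \hat{J}(\theta_0) - \hat{J}(\theta^\star) = \Delta_0$), I obtain
\begin{equation*}
\frac{1}{T}\sum_{i=1}^T \mathbb{E}\|\nabla \hat{J}(\theta_i)\|_2^2 \;\leq\; \frac{\Delta_0}{\alpha T} + \frac{L\alpha (V+G^2)}{2} .
\end{equation*}
Finally, I would optimize the constant step size by balancing the two terms, choosing $\alpha \propto \sqrt{\Delta_0 / (TL(V+G^2))}$, which yields a bound of order $\sqrt{\Delta_0 L(V+G^2)/T}$ and matches the stated rate up to the constant factor.

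The only delicate step is the second one: deciding how to handle $\mathbb{E}_i\|g_i\|_2^2$. Bounding it directly by $V+G^2$ via Lipschitzness is clean and works for any step size, but slightly wastes information; alternatively, one can keep $\|\nabla \hat{J}(\theta_i)\|_2^2$ on the right and require $\alpha \leq 1/L$ so that it can be absorbed into the descent term. Either route leads to the same $\mathcal{O}(T^{-1/2})$ rate with the $\sqrt{\Delta_0 L(V+G^2)}$ constant; the Lipschitz route is simpler and is what I would present.
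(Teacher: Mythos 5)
Your proposal is correct and follows essentially the same argument as the paper: the $L$-smoothness descent inequality, taking the conditional expectation so that unbiasedness kills the cross term, bounding $\mathbb{E}_i\|g_i\|_2^2 \leq V + G^2$ via the variance bound plus $G$-Lipschitzness, telescoping, and balancing the step size $\alpha \propto \sqrt{\Delta_0/(TL(V+G^2))}$. The only (immaterial) difference is bookkeeping of constants in the final step, which the stated bound absorbs.
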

For completeness, we include a proof of the above lemma below. 
\begin{proof}[Proof of Lemma \ref{lemma:sgd-parameter}]
  Denote $\xi_i = g_i - \nabla_\theta {\hat{J}}(\theta_i)$.  Note that $\mathbb{E}_{i} [\xi_i] =
0$ since the stochastic gradient $g_i$ is unbiased.
From  $\theta_{i+1} = \theta_i - \alpha g_i$, we have:
\begin{align*}
  \hat{J}(\theta_{i+1}) & = \hat{J}(\theta_{i} - \alpha g_i)\\
  &\leq\hat{J}(\theta_i) - \nabla_\theta \hat{J}(\theta_i)^{\top} (\alpha g_i) + \frac{L\alpha^2}{2}\| g_i\|_2^2  \\
    & = \hat{J}(\theta_i) - \alpha \nabla_\theta \hat{J}(\theta_i)^{\top} g_i + \frac{L\alpha^2}{2} \|\xi_i + \nabla_\theta \hat{J}(\theta_i)\|^2_2 \\
    & = \hat{J}(\theta_i) - \alpha \nabla_\theta
      \hat{J}(\theta_i)^{\top} g_i + \frac{L\alpha^2}{2}(\|\xi_i\|_2^2
  + 2\xi_i^{\top}\nabla_\theta \hat{J}(\theta_i) + \|\nabla_\theta \hat{J}(\theta_i)\|_2^2 )
\end{align*} The first inequality above is obtained since the loss
function $\hat{J}(\theta)$ is $L$-smooth. Adding $\mathbb{E}_i$ on both sides and using the fact that $\mathbb{E}_i [\xi_i] = 0$, we have:
\begin{align*}
    \mathbb{E}_i [\hat{J}(\theta_{i+1})] &= \hat{J}(\theta_i) - \alpha
                                           \|\nabla_\theta
                                           \hat{J}(\theta_i)\|_2^2  +\frac{L\alpha^2}{2}\left( \mathbb{E}_i [\|\xi_i\|_2^2] + \|\nabla_\theta \hat{J}(\theta_i)\|_2^2  \right)  \\
    &\leq \hat{J}(\theta_i) - \alpha \|\nabla_\theta
      \hat{J}(\theta_i)\|_2^2 + \frac{L\alpha^2}{2}\left( \mathbb{E}_i [\|\xi_i\|_2^2] + G^2  \right) 
\end{align*} 
where the inequality is due to the lipschitz assumption. Rearranging terms, we get:
\begin{align*}
    \alpha\|\nabla_\theta \hat{J}(\theta_i)\|_2^2 &= \hat{J}(\theta_i)
      - \mathbb{E}_i [\hat{J}(\theta_{i+1})] + \frac{L\alpha^2}{2} (\mathbb{E}_i [\|\xi_i\|_2^2] + G^2) \\
    & \leq \hat{J}(\theta_i) - \mathbb{E}_i[ \hat{J}(\theta_{i+1})] + \frac{L\alpha^2}{2} (V + G^2) 
\end{align*}
Sum over from time step $1$ to $T$, we get:
\begin{align*}
    \alpha \sum_{t=1}^T \mathbb{E}\|\nabla_\theta
  \hat{J}(\theta_i)\|_2^2 &\leq \mathbb{E} [\hat{J}(\theta_0) -
                            \hat{J}(\theta_T)] + \frac{LT\alpha^2}{2}(V+G^2)
\end{align*} Divide $\alpha$  on both sides, we get:
\begin{align*}
    \sum_{t=1}^{T} \mathbb{E}&\|\nabla_\theta \hat{J}(\theta_i)\|_2^2 \leq \frac{1}{\alpha} \mathbb{E}[\hat{J}(\theta_0) - \hat{J}(\theta_T)] + {LT\alpha} (V+G^2) \\
    & \leq \frac{1}{\alpha} \mathbb{E}[\hat{J}(\theta_0) - \hat{J}(\theta^*)] + {LT\alpha} (V+G^2)  \\
    & = \frac{1}{\alpha} \Delta_0 + {LT\alpha} (V+G^2)  \\
    & \leq \sqrt{\frac{\Delta_0LT(V+G^2)}{2}} + \sqrt{2\Delta_0
      LT(V+G^2)} \\
      &\leq 2\sqrt{2\Delta_0 LT(V+G^2)}
\end{align*} with $\alpha = \sqrt{\frac{2\Delta_0}{LT(V+G^2)}}$.
Hence, we have:
\begin{align*}
    \frac{1}{T}\sum_{t=1}^T\mathbb{E}\|\nabla_\theta \hat{J}(\theta_i)\|_2^2 \leq \frac{2\sqrt{2\Delta_0 L(V+G^2)}}{\sqrt{T}}
\end{align*}
\end{proof}

The above lemma is useful as it gives us the following result:
\begin{align}
  \label{eq:stationary-point}
  \min_{1 \leq i \leq T} \mathbb{E}\|\nabla_\theta\hat{J}(\theta_i)\|_2^2 &\leq \frac{1}{T} \sum_{i=1}^T \mathbb{E}\|\nabla_\theta
                                                               \hat{J}(\theta_i)\|_2^2
                                                               \nonumber
  \\
                                                             &\leq \frac{2\sqrt{2\Delta_0L(V+G^2)}}{\sqrt{T}}
\end{align}
since the minimum is always less than the average. We have then that
using SGD to minimize a nonconvex objective finds a $\theta_i$ that is
`almost' a stationary point in bounded number of steps provided the
stochastic gradient estimate has bounded variance.

We now show that the gradient estimate $g_i$ used in Algorithm
\ref{alg:random_search_parameter} indeed has a bounded variance. Observe that
the estimate $g_i$ in the algorithm is a two-point estimate, which
should have substantially less variance than one-point
estimates \citep{agarwal2010optimal}. However, the two evaluations, resulting in $J_i^+$ and
$J_i^-$, have different independent noise. This is due to the
fact that in policy search, stochasticity arises from the
environment and cannot be controlled and we cannot obtain the
significant variance reduction that is typical of two-point
estimators. The following lemma quantifies the bound on the variance of
gradient estimate $g_i$:
\begin{lemma}
\label{lemma:grad-variance-parameter}
  Consider a smoothed objective $\hat{J}(\theta) = \mathbb{E}_{v \sim
    \mathbb{B}_d}[J(\theta + \delta v)]$ where $\mathbb{B}_d$ is the
  unit ball in $d$ dimensions, $\delta > 0$ is a scalar and the true
  objective $J(\theta)$ is $G$-lipschitz. Given gradient estimate $g_i
  = \frac{d(J_i^+ - J_i^-)}{2\delta}u$ where $u$ is sampled uniformly
  from a unit sphere $\mathbb{S}_d$ in $d$ dimensions, $J^+_i =
  J(\theta_i + \delta u) + \eta^+_i$ and $J_i^- = J(\theta - \delta u)
  + \eta_i^-$ for zero mean random i.i.d noises $\eta_i^+, \eta_i^-$, we have
  \begin{equation}
    \label{eq:grad-variance-parameter}
    \mathbb{E}_i[\|g_i - \nabla_\theta \hat{J}(\theta_i)\|_2^2] \leq
    2d^2G^2 + 2\frac{d^2\sigma^2}{\delta^2}
  \end{equation}
  where {$\sigma^2$ is the variance of the random noise $\eta$.}
\end{lemma}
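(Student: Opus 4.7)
The plan is to bound the variance by the raw second moment, decompose $g_i$ into a deterministic ``signal'' piece and a zero-mean ``noise'' piece, and apply the parallelogram-style inequality $\|a+b\|_2^2 \le 2\|a\|_2^2 + 2\|b\|_2^2$ to handle the two contributions separately.

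First, since $g_i$ is an unbiased estimator of $\nabla_\theta \hat J(\theta_i)$ (as established in \citep{flaxman2005online, agarwal2010optimal} by integration-by-parts on the sphere), we have $\mathbb{E}_i\|g_i - \nabla_\theta \hat J(\theta_i)\|_2^2 \le \mathbb{E}_i\|g_i\|_2^2$. Substituting $J_i^\pm = J(\theta_i \pm \delta u) + \eta_i^\pm$ splits $g_i = A_i + B_i$ with
\begin{align*}
    A_i = \frac{d\bigl[J(\theta_i + \delta u) - J(\theta_i - \delta u)\bigr]}{2\delta}\,u, \qquad B_i = \frac{d(\eta_i^+ - \eta_i^-)}{2\delta}\,u.
\end{align*}
Using $\|u\|_2 = 1$ and the $G$-Lipschitz assumption on $J$, I get $|J(\theta_i + \delta u) - J(\theta_i - \delta u)| \le 2G\delta$, so pointwise $\|A_i\|_2^2 \le d^2 G^2$. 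For $B_i$, the independence of $\eta_i^+$ from $\eta_i^-$ (both zero-mean with variance $\sigma^2$) and from $u$ yields $\mathbb{E}_i\|B_i\|_2^2 = \frac{d^2}{4\delta^2}\mathbb{E}[(\eta_i^+ - \eta_i^-)^2] = \frac{d^2\sigma^2}{2\delta^2}$. Combining the two pieces via $\|g_i\|_2^2 \le 2\|A_i\|_2^2 + 2\|B_i\|_2^2$ gives the stated bound (the constant in the noise term can be loosened by instead applying $(J_i^+ - J_i^-)^2 \le 2(J_i^+)^2 + 2(J_i^-)^2$ at an earlier stage).

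Each step is elementary and I do not expect significant technical difficulty. The only conceptual point worth flagging is that the two-point trick does \emph{not} produce noise cancellation here: because $\eta_i^+$ and $\eta_i^-$ arise from two independent roll-outs of the stochastic environment, their variances add to give the factor $2\sigma^2$ in $\mathbb{E}[(\eta_i^+ - \eta_i^-)^2]$, rather than cancelling as they would if both evaluations shared the same noise realization. This is precisely the observation already emphasized in the paragraph preceding the lemma, and it is the reason the $\sigma^2/\delta^2$ term survives at all — and ultimately the reason the $\sigma$-dependent term in Theorem~\ref{theorem:parameter-convergence} cannot be improved without additional control over the environment's randomness.
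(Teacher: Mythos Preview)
Your proposal is correct and follows essentially the same route as the paper: bound the variance by the raw second moment via unbiasedness, split $g_i$ into the Lipschitz-controlled signal term and the i.i.d.\ noise term, and apply $\|a+b\|_2^2 \le 2\|a\|_2^2 + 2\|b\|_2^2$. Your computation of $\mathbb{E}[(\eta_i^+ - \eta_i^-)^2] = 2\sigma^2$ is in fact slightly sharper than the paper's, which uses the cruder bound $(\eta_i^+ - \eta_i^-)^2 \le 2(\eta_i^+)^2 + 2(\eta_i^-)^2$ to obtain $4\sigma^2$ and hence the stated constant $2d^2\sigma^2/\delta^2$; you already noted this loosening yourself.
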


\begin{proof}[Proof of Lemma \ref{lemma:grad-variance-parameter}]
  From \cite{shamir2017optimal}, we know that $g_i$ is an unbiased estimate of the gradient of $\hat{J}(\theta_i)$, i.e. $\mathbb{E}_{u_i \sim \mathbb{S}_d}[g_i] = \nabla\hat{J}(\theta_i)$. Thus, we have
\begin{align*}
    \mathbb{E}_{u_i \sim \mathbb{S}_d}&\|g_i -
      \nabla\hat{J}(\theta_i)\|^2 \\
      &= \mathbb{E}_{u_i \sim \mathbb{S}_d}[\|g_i\|^2 + \|\nabla \hat{J}(\theta)_i\|^2 - 2g_i^T\nabla \hat{J}(\theta_i)] \\
    &= \mathbb{E}_{u_i \sim \mathbb{S}_d}\|g_i\|^2 + \|\nabla \hat{J}(\theta_i)\|^2 - 2\|\nabla \hat{J}(\theta_i)\|^2 \\
    &= \mathbb{E}_{u_i \sim \mathbb{S}_d}\|g_i\|^2 - \|\nabla \hat{J}(\theta_i)\|^2 \\
    &\leq \mathbb{E}_{u_i \sim \mathbb{S}_d}\|g_i\|^2 \\
    &= \frac{d^2}{4\delta^2}\mathbb{E}_{u_i \sim
      \mathbb{S}_d}\|(J(\theta_i + \delta u_i) - J(\theta_i - \delta
      u_i) + (\eta_i^+ - \eta_i^-))u_i\|^2 \\
  &\leq \frac{d^2}{2\delta^2}[\mathbb{E}_{u_i \sim \mathbb{S}_d}\|(J(\theta_i + \delta u_i) - J(\theta_i - \delta
    u_i)u_i\|_2^2 + \mathbb{E}_{u_i \sim \mathbb{S}_d}\|(\eta_i^+ -
    \eta_i^-))u_i\|^2] \\
    &\leq \frac{d^2}{2\delta^2}[\mathbb{E}_{u_i \sim \mathbb{S}_d}
      4G^2\delta^2 \|u_i\|^2 +  4\mathbb{E}_{u_i \sim
      \mathbb{S}_d}\|\eta_i^+\|_2^2 \|u_i\|_2^2]\\
    &= 2d^2G^2  + 2\frac{d^2\sigma^2}{\delta^2}
\end{align*}
where the second inequality is true as $\|a+b\|_2^2 \leq 2(\|a\|_2^2 +
\|b\|_2^2)$ and the last inequality is due to the Lipschitz assumption
on $J(\theta)$.
\end{proof}

We are ready to prove Theorem~\ref{theorem:parameter-convergence}. 
\begin{proof}[Proof of Theorem \ref{theorem:parameter-convergence}]
  Fix initial solution $\theta_0$ and denote $\Delta_0 =
  \hat{J}(\theta_0) - \hat{J}(\theta^*)$ where $\hat{J}(\theta)$ is
  the smoothed objective and $\theta^*$ is the point at which
  $\hat{J}(\theta)$  attains global minimum.
  Since the gradient estimate $g_i$ used in Algorithm
  \ref{alg:random_search_parameter} is an unbiased estimate of the
  gradient $\nabla_\theta \hat{J}(\theta_i)$, we know that Algorithm
  \ref{alg:random_search_parameter} performs SGD on the smoothed
  objective. Moreover, from Lemma \ref{lemma:grad-variance-parameter},
  we know that the variance of the gradient estimate $g_i$ is
  bounded. Hence, we can use Lemma \ref{lemma:sgd-parameter} on the
  smoothed objective $\hat{J}(\theta)$ to get
  \begin{align}
    \label{eq:sgd-parameter}
    \frac{1}{T} \sum_{i=1}^T \mathbb{E}\|\nabla_\theta
    \hat{J}(\theta_i)\|_2^2 \leq \frac{2\sqrt{2\Delta_0L(V+G^2)}}{\sqrt{T}}
  \end{align}
  where $V \leq 2d^2G^2 + 2\frac{d^2\sigma^2}{\delta^2}$ (from Lemma
  \ref{lemma:grad-variance-parameter}). We can relate $\nabla_\theta
  \hat{J}(\theta)$ and $\nabla_\theta J(\theta)$ - the quantity that
  we ultimately care about, as follows:
  \begin{align*}
    \frac{1}{T} &\sum_{i=1}^T \mathbb{E}\|\nabla_\theta
    J(\theta_i)\|_2^2\\
    &= \frac{1}{T} \sum_{i=1}^T
                        \mathbb{E}\|\nabla_\theta J(\theta_i) -
                        \nabla_\theta \hat{J}(\theta_i) +
      \nabla_\theta \hat{J}(\theta_i)\|_2^2 \\
    &\leq \frac{2}{T} \sum_{i=1}^T \mathbb{E}\|\nabla_\theta J(\theta_i) -
                        \nabla_\theta \hat{J}(\theta_i)\|_2^2 + \mathbb{E}\|\nabla_\theta \hat{J}(\theta_i)\|_2^2
  \end{align*}
  We can use Lemma \ref{lemma:grad-diff-parameter} to bound the first
  term and Equation \ref{eq:sgd-parameter} to bound the second
  term. Thus, we have
  \begin{align*}
    \frac{1}{T} \sum_{i=1}^T \mathbb{E}\|\nabla_\theta
    J(\theta_i)\|_2^2 \leq \frac{2}{T}[TL^2\delta^2 + 2\sqrt{2\Delta_0L(V+G^2)T}]
  \end{align*}
  Substituting the bound for $V$ from Lemma
  \ref{lemma:grad-variance-parameter}, using the inequality
  $\sqrt{a+b} \leq \sqrt{a} + \sqrt{b}$ for $a, b \in \mathbb{R}^+$,
  optimizing over $\delta$, and using $\Delta_0 \leq \Qbound$ we get
  \begin{equation*}
    \frac{1}{T} \sum_{i=1}^T \mathbb{E}\|\nabla_\theta
    J(\theta_i)\|_2^2 \leq \mathcal{O}(\Qbound^{\frac{1}{2}}dT^{\frac{-1}{2}} + \Qbound^{\frac{1}{3}}d^{\frac{2}{3}}T^{\frac{-1}{3}}\sigma)
  \end{equation*}
\end{proof}

\section{Proof of Theorem }

The bound on the bias of the gradient estimate is given
by the following lemma:
\begin{lemma}
  \label{lemma:bias-bound-action}
  If the assumptions in Section \ref{sec:assumptions_action} are
  satisfied, then for  the gradient estimate $g_i$ used in Algorithm
  \ref{alg:random_search_action} and the gradient of the objective
  $J(\theta)$ given in equation \ref{eq:dpg-gradient}, we have
  \begin{equation}
    \label{eq:bias-bound-action}
    \|\mathbb{E}[g_i] - \nabla_\theta J(\theta_i)\| \leq KUH\delta
  \end{equation}
\end{lemma}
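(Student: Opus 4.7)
The plan is to directly compare the expression for $\mathbb{E}[g_i]$ that was already derived in the main text with the exact gradient given in Eq.~\ref{eq:dpg-gradient}. Both are sums of $H$ terms of the form $\mathbb{E}_{s_t \sim d^t_{\pi_{\theta_i}}}[\nabla_\theta \pi(\theta_i,s_t) \cdot (\cdot)]$, where the $(\cdot)$ slot contains $\nabla_a Q^t_{\pi_{\theta_i}}(s_t,\pi(\theta_i,s_t)+\delta v)$ averaged over $v\sim\mathbb{B}_p$ in the first case and $\nabla_a Q^t_{\pi_{\theta_i}}(s_t,\pi(\theta_i,s_t))$ in the second. So I would first subtract the two sums term by term and pull the common $\nabla_\theta \pi(\theta_i,s_t)$ factor out, reducing the problem to bounding a per-timestep quantity of the form
\begin{equation*}
\bigl\| \nabla_\theta \pi(\theta_i,s_t)\,\mathbb{E}_{v\sim\mathbb{B}_p}\bigl[\nabla_a Q^t_{\pi_{\theta_i}}(s_t,\pi(\theta_i,s_t)+\delta v) - \nabla_a Q^t_{\pi_{\theta_i}}(s_t,\pi(\theta_i,s_t))\bigr] \bigr\|.
\end{equation*}

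Next I would bound each of the two factors separately. The $K$-Lipschitz assumption on $\pi(\theta,s)$ in $\theta$ is equivalent to the Jacobian having operator norm at most $K$, so $\|\nabla_\theta \pi(\theta_i,s_t)\|_{\mathrm{op}} \leq K$. For the bracketed term, the $U$-smoothness of $Q^t_{\pi_\theta}(s,a)$ in $a$ gives $\|\nabla_a Q^t_{\pi_{\theta_i}}(s_t,a_1) - \nabla_a Q^t_{\pi_{\theta_i}}(s_t,a_2)\| \leq U\|a_1-a_2\|$; applying this with $a_1 = \pi(\theta_i,s_t)+\delta v$ and $a_2 = \pi(\theta_i,s_t)$, pushing the norm inside the expectation by Jensen, and using $\|v\|\leq 1$, yields an upper bound of $U\delta$. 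Combining the two bounds by sub-multiplicativity of the operator norm, each per-timestep term contributes at most $KU\delta$.

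Finally I would apply the triangle inequality across the sum over $t=0,\dots,H-1$ and push the outer expectation over $s_t$ through the norm (again by Jensen), which yields $H \cdot KU\delta = KUH\delta$, the desired bound. No real obstacle is expected here; the only mild care needed is to recall that $\mathbb{E}[g_i]$, as computed in the main text, already absorbed the $H$ factor from uniform time sampling against the $1/H$ weight, so the two sums are directly comparable term by term without a rescaling step.
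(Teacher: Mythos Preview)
Your proposal is correct and follows essentially the same approach as the paper's proof: subtract the two $H$-term sums, pull out the common Jacobian factor, bound it by $K$ via the Lipschitz assumption on $\pi$, bound the difference of $Q$-gradients by $U\delta$ via smoothness and $\|v\|\leq 1$ (after pushing the norm inside the expectation), and sum over $t$ to pick up the factor $H$. The paper's argument is identical in structure and uses the same inequalities in the same order.
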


\begin{proof}[Proof of Lemma \ref{lemma:bias-bound-action}]
  To prove that the bias is bounded, let's consider for any $i$
  \begin{align*}
      &\|\mathbb{E}[g_i] - \nabla_\theta J(\theta_i)\|_2 = \|\sum_{t=0}^{H-1} \mathbb{E}_{s_t\sim
        d_{\pi_{\theta_i}}^t}[\nabla_\theta \pi(\theta_i, s_t) \nabla_a (\mathbb{E}_{v \sim \mathbb{B}_p}
        Q_{\pi_{\theta_i}}^t(s_t, \pi(\theta_i, s_t) + \delta v) -
          Q_{\pi_{\theta_i}}^t(s_t, \pi(\theta_i, s_t)))]\|_2 \\
    &\leq \sum_{t=0}^{H-1} \mathbb{E}_{s_t\sim d_{\pi_{\theta_i}}^t, v
      \sim \mathbb{B}_p}\|\nabla_\theta \pi(\theta_i, s_t)\|_2 \|[\nabla_a
      Q_{\pi_{\theta_i}}^t(s_t, \pi(\theta_i, s_t) + \delta v) -
      \nabla_a Q_{\pi_{\theta_i}}^t(s_t, \pi(\theta_i, s_t))]\|_2 \\
    &\leq \sum_{t=0}^{H-1} KU\delta \mathbb{E}_{v \sim
      \mathbb{B}_p}\|v\|_2 \\
    &\leq KUH\delta
  \end{align*}
  The first inequality above is obtained by using the fact that
  $\|\mathbb{E}[X]\|_2 \leq \mathbb{E}\|X\|_2$, and the second
  inequality using the $K$-lipschitz assumption on $\pi(\theta, s)$
  and $U$-smooth assumption on $Q_{\pi_\theta}^t(s, a)$ in $a$. Also,
  observe that we interchanged the derivative and expectation above by
  using the assumptions on $Q_{\pi_\theta}^t$ as stated in Section
  \ref{sec:assumptions_action}. 
\end{proof}

We will now show that the gradient estimate $g_i$ used in Algorithm
\ref{alg:random_search_action} has a bounded variance. Note that the
gradient estimate constructed in Algorithm
\ref{alg:random_search_action} is a one-point estimate, unlike policy
search in parameter space where we had a two-point estimate. Thus, the variance would be higher and the bound
on the variance of such a one-point estimate is given below
\begin{lemma}
  \label{lemma:grad-variance-action}
  Given a gradient estimate $g_i$ as shown in Algorithm
  \ref{alg:random_search_action}, the variance of the estimate can be
  bounded as
  \begin{equation}
    \label{eq:grad-variance-action}
    \mathbb{E}\|g_i - \mathbb{E}[g_i]\|_2^2 \leq
    \frac{2H^2p^2K^2}{\delta^2} ((\Qbound + W\delta)^2 + \sigma^2)
  \end{equation}
  where $\sigma^2$ is the variance of the random noise $\tilde{\eta}$.
\end{lemma}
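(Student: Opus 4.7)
The plan is to bypass the expectation $\mathbb{E}[g_i]$ entirely by using the elementary inequality $\mathbb{E}\|X - \mathbb{E}[X]\|_2^2 \leq \mathbb{E}\|X\|_2^2$, and then to bound the raw second moment $\mathbb{E}\|g_i\|_2^2$ by peeling off one source of randomness at a time. Since $g_i = (Hp\tilde{Q}_i/\delta)\,\Psi_i u$ with $\|u\|_2 = 1$, the quantity to control factors cleanly into $\|\Psi_i u\|_2^2$ and $\tilde{Q}_i^2$, which can be handled independently.

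First I would bound $\|\Psi_i u\|_2 \leq \|\Psi_i\|_{\mathrm{op}} \leq K$ using the $K$-Lipschitz assumption on $\pi(\theta,\cdot)$ in $\theta$, which forces the operator norm of its Jacobian to be at most $K$. This pulls out the deterministic factor $H^2 p^2 K^2/\delta^2$ from $\|g_i\|_2^2$, leaving me to bound $\mathbb{E}[\tilde{Q}_i^2]$.

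Next I would split $\tilde{Q}_i = Q^t_{\pi_{\theta_i}}(s_t, \pi(\theta_i, s_t) + \delta u) + \tilde{\eta}_i$ and apply $(a+b)^2 \leq 2a^2 + 2b^2$. For the first summand, the bound $|Q^t_{\pi_{\theta_i}}(s_t,\pi(\theta_i,s_t))| \leq \Qbound$ combined with $W$-Lipschitzness of $Q$ in $a$ gives $|Q^t_{\pi_{\theta_i}}(s_t,\pi(\theta_i,s_t)+\delta u)| \leq \Qbound + W\delta$, since $\|\delta u\|_2 = \delta$. For the noise summand, $\mathbb{E}[\tilde{\eta}_i^2] = \sigma^2$ by the zero-mean assumption. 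Combining yields $\mathbb{E}[\tilde{Q}_i^2] \leq 2((\Qbound + W\delta)^2 + \sigma^2)$, and multiplying by the deterministic prefactor produces the claim.

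The main point to flag, rather than a real obstacle, is that we can only bound the variance by the crude second moment and pick up the full $1/\delta^2$: because the environment is stochastic we cannot reach the same $s_t$ twice, and so the two-point variance-reduction trick used for Algorithm~\ref{alg:random_search_parameter} (Lemma~\ref{lemma:grad-variance-parameter}) is unavailable here. This is precisely the structural asymmetry between the two estimators and is what will later force a bias-variance trade-off in the tuning of $\delta$ when proving Theorem~\ref{theorem:action-convergence}.
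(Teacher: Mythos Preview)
Your proposal is correct and follows essentially the same route as the paper: bound the variance by the raw second moment, pull out the factor $H^2p^2K^2/\delta^2$ via the $K$-Lipschitz bound on the policy Jacobian, then split $\tilde{Q}_i$ into the $Q$-term (bounded by $\Qbound+W\delta$ via $W$-Lipschitzness) and the noise term ($\mathbb{E}[\tilde\eta_i^2]=\sigma^2$) using $(a+b)^2\le 2a^2+2b^2$. Your closing remark about the unavailability of two-point estimates is also exactly the point the paper makes in the surrounding discussion.
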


\begin{proof}[Proof of Lemma \ref{lemma:grad-variance-action}]
  To bound the variance of the gradient estimate $g_i$ in Algorithm
  \ref{alg:random_search_action}, lets consider 
  \begin{align*}
    &\mathbb{E}_i\|g_i - \mathbb{E}[g_i]\|_2^2 = \mathbb{E}_i\|g_i\|_2^2 -
                                              \|\mathbb{E}_i[g_i]\|_2^2
                                            \leq \mathbb{E}_i\|g_i\|_2^2 \\
    &= \frac{H^2p^2}{\delta^2} \mathbb{E}_i\|\nabla_\theta
      \pi(\theta_i, s_t)
    (Q_{\pi_{\theta_i}}^t(s_t, \pi(\theta_i, s_t)
      + \delta u) + \tilde{\eta}_i) u\|_2^2 \\
    &\leq \frac{K^2p^2H^2}{\delta^2} \mathbb{E}_{i}\|Q_{\pi_{\theta_i}}^t(s_t, \pi(\theta_i,
      s_t) + \delta u)u + \tilde{\eta}_iu\|_2^2
  \end{align*}
  where $\mathbb{E}_i$ denotes expectation with respect to the
  randomness at iteration $i$ and the inequality is obtained using
  $K$-lipschitz assumption on $\pi(\theta, s)$. Note that we can
  express $Q_{\pi_{\theta_i}}^t(s_t, \pi(\theta_i, s_t) + \delta u)
  \leq Q_{\pi_{\theta_i}}^t(s_t, \pi(\theta_i, s_t)) + W\delta\|u\|_2
  \leq \Qbound + W\delta$ where we used the $W$-lipschitz assumption on
  $Q_{\pi_{\theta}}^t(s, a)$ in $a$ and that it is bounded everywhere
  by constant $\Qbound$. Thus, we have
  \begin{align*}
    &\mathbb{E}_i\|g_i - \mathbb{E}[g_i]\|_2^2 \\
    &\leq \frac{K^2p^2H^2}{\delta^2} \mathbb{E}_{i}\|(\Qbound + W\delta)u +
      \tilde{\eta}_iu\|_2^2 \\
    &\leq \frac{2K^2p^2H^2}{\delta^2}
      (\mathbb{E}_i\|(\Qbound+W\delta)u\|_2^2 +
      \mathbb{E}_i\|\tilde{\eta}_iu\|_2^2 \\
    &\leq \frac{2K^2p^2H^2}{\delta^2} ((\Qbound+W\delta)^2 + \sigma^2)
  \end{align*}
\end{proof}

We are now ready to prove theorem \ref{theorem:action-convergence}
\begin{proof}[Proof of Theorem \ref{theorem:action-convergence}]
  Fix initial solution $\theta_0$ and denote $\Delta_0 =
  J(\theta_0) - J(\theta^*)$ where $\theta^*$ is the point at which
  $J(\theta)$ attains global minimum. Denote $\xi_i = g_i - \mathbb{E}_i[g_i]$ and $\beta_i =
  \mathbb{E}_i[g_i] - \nabla_\theta J(\theta_i)$. From Lemma
  \ref{lemma:bias-bound-action}, we know $\|\beta_i\| \leq KUH\delta$
  and from lemma \ref{lemma:grad-variance-action}, we know
  $\mathbb{E}\|\xi_i\|_2^2 = V \leq \frac{2K^2p^2H^2}{\delta^2} ((\Qbound +
  W\delta)^2 + \sigma^2)$ and $\mathbb{E}_i[\xi_i] = 0$ from definition. From $\theta_{i+1} = \theta_i - \alpha g_i$
  we have:
  \begin{align*}
    J(\theta_{i+1}) &= J(\theta_i - \alpha g_i) \\
    &\leq J(\theta_i) - \alpha \nabla_\theta J(\theta_i)^Tg_i +
      \frac{L\alpha^2}{2}\|g_i\|_2^2 \\
    &= J(\theta_i) - \alpha \nabla_\theta J(\theta_i)^T g_i +
      \frac{L\alpha^2}{2}\|\xi_i + \mathbb{E}_i[g_i]\|_2^2 \\
                    &= J(\theta_i) - \alpha \nabla_\theta J(\theta_i)^T g_i +
      \frac{L\alpha^2}{2}(\|\mathbb{E}_i[g_i]\|_2^2 + \|\xi_i\|_2^2 + 2\mathbb{E}_i[g_i]^T\xi_i)
  \end{align*}
  Taking expectation on both sides with respect to randomness at
  iteration $i$, we have
  \begin{align*}
    &\mathbb{E}_i[J(\theta_{i+1})] = J(\theta_i) - \alpha\nabla_\theta
    J(\theta_i)^T\mathbb{E}_i[g_i] + \frac{L\alpha^2}{2}(\|\mathbb{E}_i [g_i]\|_2^2 +
    \mathbb{E}_i\|\xi_i\|_2^2 +
      2\mathbb{E}_i[g_i]^T\mathbb{E}_i[\xi_i]) \\
    &\leq J(\theta_i) - \alpha\nabla_\theta J(\theta_i)^T (\beta_i +
      \nabla_\theta J(\theta_i)) +\frac{L\alpha^2}{2}(\|\beta_i + \nabla_\theta
      J(\theta_i)\|_2^2 + V) \\
    &= J(\theta_i) - \alpha\|\nabla_\theta J(\theta_i)\|_2^2 +
      \frac{L\alpha^2}{2}(\|\nabla_\theta J(\theta_i)\|_2^2 + V +
      \|\beta_i\|_2^2) + (L\alpha^2 - \alpha)\nabla_\theta J(\theta_i)^T\beta_i \\
    &\leq J(\theta_i) - \alpha\|\nabla_\theta J(\theta_i)\|_2^2 +
      \frac{L\alpha^2}{2}(G^2 + V + K^2H^2U^2\delta^2) + (L\alpha^2 - \alpha)\nabla_\theta J(\theta_i)^T\beta_i \\
    &\leq J(\theta_i) - \alpha\|\nabla_\theta J(\theta_i)\|_2^2 +
      \frac{L\alpha^2}{2}(G^2 + V + K^2H^2U^2\delta^2) + (L\alpha^2 + \alpha)\|\nabla_\theta
      J(\theta_i)\|\|\beta_i\| \\
    &\leq J(\theta_i) - \alpha\|\nabla_\theta J(\theta_i)\|_2^2 +
      \frac{L\alpha^2}{2}(G^2 + V + K^2H^2U^2\delta^2) + (L\alpha^2 + \alpha)GKUH\delta
  \end{align*}
  Rearranging terms and summing over timestep $1$ to $T$, we get
  \begin{align*}
    &\alpha\sum_{i=1}^T \|\nabla_\theta J(\theta_i)\|_2^2 \leq J(\theta_0) -
      \mathbb{E}_T[J(\theta_T)] + \frac{LT\alpha^2}{2}(G^2 + V +
      K^2H^2U^2\delta^2) + (L\alpha^2 + \alpha)GKUHT\delta \\
    &\leq \Delta_0 + \frac{LT\alpha^2}{2}(G^2 + V +
      K^2H^2U^2\delta^2) + (L\alpha^2 + \alpha)GKUHT\delta \\
    &\sum_{i=1}^T \|\nabla_\theta J(\theta_i)\|_2^2 \leq
      \frac{\Delta_0}{\alpha} + \frac{LT\alpha}{2}(G^2 + V +
      K^2H^2U^2\delta^2) + (L\alpha + 1)GKUHT\delta \\
    &\leq \frac{\Delta_0}{\alpha} + \frac{LT\alpha}{2}(G^2 +
      K^2H^2U^2\delta^2 + 2GKUH\delta)+ GKUHT\delta + \frac{LT\alpha}{2}V \\
    &\leq \frac{\Delta_0}{\alpha} + \frac{LT\alpha}{2}(G+KHU\delta)^2
    + GKUHT\delta + 
      \frac{LT\alpha K^2p^2H^2}{\delta^2}((\Qbound + W\delta)^2 + \sigma^2) \\
      &\leq  \frac{\Delta_0}{\alpha} + LT\alpha(G^2 + K^2H^2U^2\delta^2) + GKUHT\delta + 2\frac{LT\alpha K^2p^2H^2}{\delta^2}(\Qbound^2 + W^2\delta^2 +\sigma^2)
  \end{align*}
  Using $\Delta_0 \leq \Qbound$ and optimizing over $\alpha$ and $\delta$, we get $\alpha = \mathcal{O}(\Qbound^{\frac{3}{4}}T^{-\frac{3}{4}}H^{-1}p^{-\frac{1}{2}}(\Qbound^2 + \sigma^2)^{-\frac{1}{4}})$ and $\delta = \mathcal{O}(T^{-\frac{1}{4}}p^{\frac{1}{2}}(\Qbound^2 + \sigma^2)^{\frac{1}{4}})$. This gives us
  \begin{equation}
      \frac{1}{T}\sum_{i=1}^T \|\nabla_\theta J(\theta_i)\|_2^2 \leq \mathcal{O}(T^{-\frac{1}{4}}Hp^{\frac{1}{2}}(\Qbound^3 + \sigma^2\Qbound)^{\frac{1}{4}})
  \end{equation}
  
\end{proof}

\section{Implementation Details}
\label{sec:impl-deta}

\subsection{One-step Control Experiments}
\label{sec:one-step-control-1}

\subsubsection{Tuning Hyperparameters for ARS}
We tune the hyperparameters for ARS \citep{mania2018simple} in both MNIST and linear regression experiments, by choosing a candidate set of values for each hyperparameter: stepsize, number of directions sampled, number of top directions chosen and the perturbation length along each direction. The candidate hyperparameter values are shown in Table \ref{tab:hyperparam}. 

\begin{table}[ht]
    \centering
    \begin{tabular}{|c|c|}
      \hline
      \textbf{Hyperparameter} & \textbf{Candidate Values}\\
    \hline
    Stepsize &  $0.001, 0.005, 0.01, 0.02, 0.03$\\
    \hline
    \# Directions &  $10, 50, 100, 200, 500$\\
    \hline
    \# Top Directions & $5, 10, 50, 100, 200$\\
    \hline
    Perturbation & $0.001, 0.005, 0.01, 0.02, 0.03$ \\
    \hline
    \end{tabular} 
    \caption{Candidate hyperparameters used for tuning in ARS  experiments}
    \label{tab:hyperparam}
\end{table}
           
We use the hyperparameters shown in Table \ref{tab:chosen-hyperparams} chosen through this tuning for each of the experiments in this work. The hyperparameters are chosen by averaging the test squared loss across three random seeds (different from the 10 random seeds used in actual experiments) and chosing the setting that has the least mean test squared loss after 100000 samples.

\begin{table}[ht]
    \centering
    \begin{tabular}{|c|c|c|c|c|}
    \hline
    \textbf{Experiment} & \textbf{Stepsize} & \textbf{\# Dir}. & \textbf{\# Top Dir.} & \textbf{Perturbation}\\
    \hline
    MNIST     &  0.02 & 50 & 20 & 0.03\\
    \hline
    LR $d=10$     & 0.03 & 10 & 10 & 0.03 \\
    \hline
    LR $d=100$ & 0.03 & 10 & 10 & 0.02 \\
    \hline
    LR $d=1000$ & 0.03 & 200 & 200 & 0.03 \\
    \hline
    \end{tabular}
    \caption{Hyperparameters chosen for ARS in each experiment. LR is short-hand for Linear Regression.}
    \label{tab:chosen-hyperparams}
\end{table}

\begin{table}[ht]
    \centering
    \begin{tabular}{|c|c|c|}
    \hline
    \textbf{Experiment}     &  \textbf{Learning Rate} & \textbf{Batch size}\\
    \hline
    MNIST     &  0.001 & 512\\
    \hline
    LR $d=10$ & 0.08 & 512\\
    \hline
    LR $d=100$ & 0.03 & 512\\
    \hline
    LR $d=1000$ & 0.01 & 512\\
    \hline
    \end{tabular}
    \caption{Learning rate and batch size used for REINFORCE experiments. We use an ADAM \citep{kingma2014adam} optimizer for these experiments.}
    \label{tab:hyperparam-reinforce}
\end{table}

\begin{table}[ht]
    \centering
    \begin{tabular}{|c|c|c|}
    \hline
    \textbf{Experiment}     &  \textbf{Learning Rate} & \textbf{Batch size}\\
    \hline
    LR $d=10$ & 2.0 & 512\\
    \hline
    LR $d=100$ & 2.0 & 512\\
    \hline
    \end{tabular}
    \caption{Learning rate and batch size used for Natural REINFORCE experiments. Note that we decay the learning rate after each batch by $\sqrt{T}$ where $T$ is the number of batches seen.}
    \label{tab:hyperparam-nreinforce}
\end{table}

\subsubsection{MNIST Experiments}
\label{sec:mnist-details}

The CNN architecture used is as shown in Figure \ref{fig:arch}\footnote{This figure is generated by adapting the code from \url{https://github.com/gwding/draw_convnet}}. The total number of parameters in this model is $d=21840$. For supervised learning, we use a cross-entropy loss on the softmax output with respect to the true label. To train this model, we use a batch size of 64 and a stochastic gradient descent (SGD) optimizer with learning rate of 0.01 and a momentum factor of 0.5. We evaluate the test accuracy of the model over all the $10000$ images in the MNIST test dataset. 

\begin{figure}[H]
    \centering
    \includegraphics[width=0.9\linewidth]{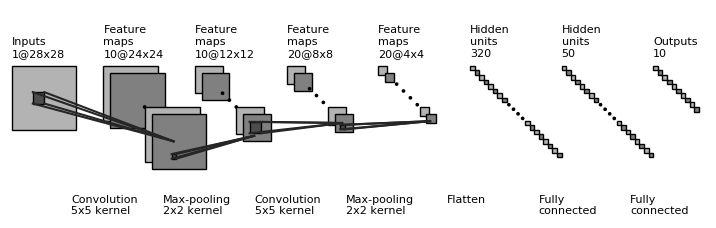}
    \caption{CNN architecture used for the MNIST experiments}
    \label{fig:arch}
\end{figure}

For REINFORCE, we use the same architecture as before. We train the model by sampling from the categorical distribution parameterized by the softmax output of the model and then computing a $\pm 1$ reward based on whether the model predicted the correct label. The loss function is the REINFORCE loss function given by,
\begin{equation}
    J(\theta) = \frac{1}{N} \sum_{i=1}^N r_i \log(\mathbb{P}(\hat y_i|x_i, \theta))
\end{equation}
where $\theta$ is the parameters of the model, $r_i$ is the reward obtained for example $i$, $\hat y_i$ is the predicted label for example $i$ and $x_i$ is the input feature vector for example $i$. The reward $r_i$ is given by $r_i = 2*\mathbb{I}[\hat y_i = y_i] - 1$, where $\mathbb{I}$ is the $0-1$ indicator function and $y_i$ is the true label for example $i$.

For ARS, we use the same architecture and reward function as before. The hyperparameters used are shown in Table \ref{tab:chosen-hyperparams} and we closely follow the algorithm outlined in \citep{mania2018simple}.

\subsubsection{Linear Regression Experiments}
\label{sec:linreg-details}

We generate training and test data for the linear regression experiments as follows: we sampled a random $d+1$ dimensional vector $w$ where $d$ is the input dimensionality. We also sampled a random $d \times d$ covariance matrix $C$. The training and test dataset consists of $d+1$ vectors $x$ whose first element is always $1$ (for the bias term) and the rest of the $d$ terms are sampled from a multivariate normal distribution with mean $\mathbf{0}$ and covariance matrix $C$. The target vectors $y$ are computed as $y = w^Tx + \epsilon$ where $\epsilon$ is sampled from a univariate normal distribution with mean $0$ and standard deviation $0.001$.

We implemented both SGD and Newton Descent on the mean squared loss, for the supervised learning experiments. For SGD, we used a learning rate of $0.1$ for $d=10, 100$ and a learning rate of $0.01$ for $d=1000$, and a batch size of 64. For Newton Descent, we also used a batch size of 64. To frame it as a one-step MDP, we define a reward function $r$ which is equal to the negative of mean squared loss. Both REINFORCE and ARS use this reward function. To compute the REINFORCE loss, we take the prediction of the model $\hat{w}^Tx$, add a mean $0$ standard deviation $\beta = 0.5$ Gaussian noise to it, and compute the reward (negative mean squared loss) for the noise added prediction. The REINFORCE loss function is then given by
\begin{equation}
    J(w) = \frac{1}{N} \sum_{i=1}^N r_i \frac{- (y_i - \hat{w}^Tx_i)^2}{2\beta^2}
\end{equation}
where $r_i = -(y_i - \hat y_i)^2$, $\hat y_i$ is the noise added prediction and $\hat{w}^Tx_i$ is the prediction by the model. We use an Adam optimizer with learning rate and batch size as shown in Table \ref{tab:hyperparam-reinforce}. For the natural REINFORCE experiments, we estimate the fisher information matrix and compute the descent direction by solving the linear system of equations $Fx = g$ where $F$ is the fisher information matrix and $g$ is the REINFORCE gradient. We use SGD with a $O(1/\sqrt{T})$ learning rate, where $T$ is the number of batches seen, and batch size as shown in Table \ref{tab:hyperparam-nreinforce}. 

For ARS, we closely follow the algorithm outlined in \citep{mania2018simple}.

\subsection{Multi-step Control Experiments}
\label{sec:multi-step-control-1}

\subsubsection{Tuning Hyperparameters for ARS}
\label{sec:tuning-hyperp-ars}

We tune the hyperparameters for ARS \citep{mania2018simple} in both
mujoco and LQR experiments, similar to the one-step control
experiments. The candidate hyperparameter values are shown in Tables
\ref{tab:hyperparam-multi-ars-mujoco} and \ref{tab:hyperparam-multi-ars-lqr}. We have observed that using all the
directions in ARS is always preferable under the low horizon settings
that we explore. Hence, we do not conduct a hyperparameter search over
the number of top directions and instead keep it the same as the
number of directions.

\begin{table}[ht]
    \centering
    \begin{tabular}{|c|c|c|}
    \hline
    \textbf{Hyperparameter} & \textbf{Swimmer-v2} &
                                                    \textbf{HalfCheetah-v2}\\
    \hline
    Stepsize &  $0.03, 0.05, 0.08, 0.1, 0.15$ & $0.001, 0.003,
                                                0.005,0.008, 0.01$ \\
    \hline
    \# Directions &  $5, 10, 20$ & $5, 10, 20$\\
    \hline
    Perturbation & $0.05, 0.1, 0.15, 0.2$ & $0.01, 0.03, 0.05, 0.08$\\
    \hline
    \end{tabular} 
    \caption{Candidate hyperparameters used for tuning in ARS experiments}
    \label{tab:hyperparam-multi-ars-mujoco}
\end{table}

\begin{table}[ht]
    \centering
    \begin{tabular}{|c|c|}
    \hline
    \textbf{Hyperparameter} & \textbf{LQR}\\
    \hline
    Stepsize &  $0.0001, 0.0003, 0.0005, 0.0008, 0.001, 0.003, 0.005,
               0.008, 0.01$ \\
    \hline
    \# Directions &  $10$ \\
    \hline
    Perturbation & $0.01, 0.05, 0.1$ \\
    \hline
    \end{tabular} 
    \caption{Candidate hyperparameters used for tuning in ARS experiments}
    \label{tab:hyperparam-multi-ars-lqr}
  \end{table}

We use the hyperparameters shown in Tables
\ref{tab:chosen-hyperparam-multi-ars-swimmer} and \ref{tab:chosen-hyperparam-multi-ars-halfcheetah} chosen through tuning for each
of the multi-step experiments. The hyperparameters are chosen by
averaging the total reward obtained across three random seeds
(different from the 10 random seeds used in experiments presented in
Figures \ref{fig:swimmer}, \ref{fig:halfcheetah}, \ref{fig:lqr}) and
chosing the setting that has the highest total reward after $10000$
episodes of training..

\begin{table}[ht]
    \centering
    \begin{tabular}{|c|c|c|c|}
    \hline
      \textbf{Horizon} & \textbf{Stepsize} &
                                             \textbf{\#
                                             Directions} &
                                                           \textbf{Perturbation}\\
      \hline
      $H = 1$ & 0.15  & 5  & 0.2 \\
      \hline
      $H = 2$ & 0.08  & 5 &  0.2\\
      \hline
      $H = 3$ & 0.15  & 5 &  0.2\\
      \hline
      $H = 4$ & 0.08  & 5 & 0.2 \\
      \hline
      $H = 5$ & 0.05  & 5 &  0.2\\
      \hline
      $H = 6$ & 0.08  &5  & 0.2 \\
      \hline
      $H = 7$ & 0.08  & 5 & 0.2 \\
      \hline
      $H = 8$ & 0.08  & 5 & 0.2 \\
      \hline
      $H = 9$ & 0.1  &5  & 0.2 \\
      \hline
      $H = 10$ & 0.08  &5  & 0.2 \\
      \hline
      $H = 11$ & 0.08  &5  & 0.2 \\
      \hline
      $H = 12$ & 0.1  &5  & 0.2 \\
      \hline
      $H = 13$ & 0.08  & 5 & 0.2 \\
      \hline
      $H = 14$ & 0.08  & 5 &0.2  \\
      \hline
      $H = 15$ & 0.08  & 10 & 0.2 \\
      \hline
    \end{tabular} 
    \caption{Hyperparameters chosen for multi-step experiments for ARS
    in Swimmer-v2}
    \label{tab:chosen-hyperparam-multi-ars-swimmer}
\end{table}

\begin{table}[ht]
    \centering
    \begin{tabular}{|c|c|c|c|}
    \hline
      \textbf{Horizon} & \textbf{Stepsize} &
                                             \textbf{\#
                                             Directions} &
                                                           \textbf{Perturbation}\\
      \hline
      $H = 1$ & 0.001  & 20 & 0.08 \\
      \hline
      $H = 2$ & 0.008  & 5 &  0.08\\
      \hline
      $H = 3$ &  0.008  & 10 & 0.08 \\
      \hline
      $H = 4$ &  0.003  & 5 &  0.05\\
      \hline
      $H = 5$ &  0.003  & 5 &  0.05\\
      \hline
      $H = 6$ &  0.003  & 10 &  0.05\\
      \hline
      $H = 7$ &  0.008  & 20 &  0.05\\
      \hline
      $H = 8$ &  0.008  & 5 &  0.05\\
      \hline
      $H = 9$ &  0.01  & 20 &  0.03\\
      \hline
      $H = 10$ &   0.005 & 10 &  0.03\\
      \hline
      $H = 11$ &  0.008  & 20 &  0.03\\
      \hline
      $H = 12$ &  0.005  & 5 &  0.05\\
      \hline
      $H = 13$ &  0.008  & 20 &  0.03\\
      \hline
      $H = 14$ &  0.01  & 10 &  0.03\\
      \hline
      $H = 15$ &  0.008  & 20 &  0.03\\
      \hline
    \end{tabular} 
    \caption{Hyperparameters chosen for multi-step experiments for ARS
    in HalfCheetah-v2}
    \label{tab:chosen-hyperparam-multi-ars-halfcheetah}
\end{table}

\subsubsection{Tuning Hyperparameters for ExAct}
\label{sec:tuning-hyperp-exact}

We tune the hyperparameters for ExAct (Algorithm
\ref{alg:random_search_action}) in both mujoco and LQR experiments,
similar to ARS. The candidate hyperparameter values are shown in
Tables \ref{tab:hyperparam-multi-exact-mujoco} and
\ref{tab:hyperparam-multi-exact-lqr}. Similar to ARS, we do not
conduct a hyperparameter search over the number of top directions and
instead keep it the same as the number of directions.

\begin{table}[H]
    \centering
    \begin{tabular}{|c|c|c|}
    \hline
    \textbf{Hyperparameter} & \textbf{Swimmer-v2} &
                                                    \textbf{HalfCheetah-v2}\\
    \hline
    Stepsize &  \thead{$0.005, 0.008, 0.01, 0.015,$\\$0.02, 0.025, 0.03$} & \thead{$0.0001, 0.0003,
                                                0.0005,0.0008,$\\$0.001,
                                                                 0.002,
                                                                 0.003$} \\
    \hline
    \# Directions &  $5, 10, 20$ & $5, 10, 20$\\
    \hline
    Perturbation & $0.15, 0.2, 0.3, 0.5$ &  $0.15, 0.2, 0.3, 0.5$\\
    \hline
    \end{tabular} 
    \caption{Candidate hyperparameters used for tuning in ExAct experiments}
    \label{tab:hyperparam-multi-exact-mujoco}
\end{table}

\begin{table}[ht]
  \centering
  \begin{tabular}{|c|c|}
    \hline
    \textbf{Hyperparameter} & \textbf{LQR}\\
    \hline
    Stepsize &  $0.0001, 0.0003, 0.0005, 0.0008, 0.001, 0.003, 0.005,
               0.008, 0.01$ \\
    \hline
    \# Directions &  $10$ \\
    \hline
    Perturbation & $0.01, 0.05, 0.1$ \\
    \hline
  \end{tabular} 
  \caption{Candidate hyperparameters used for tuning in ExAct experiments}
  \label{tab:hyperparam-multi-exact-lqr}
\end{table}

We use the hyperparameters shown in Tables
\ref{tab:chosen-hyperparam-multi-exact-swimmer} and \ref{tab:chosen-hyperparam-multi-exact-halfcheetah} chosen through tuning for
each of the multi-step experiments, similar to ARS.

\begin{table}[ht]
    \centering
    \begin{tabular}{|c|c|c|c|}
    \hline
      \textbf{Horizon} & \textbf{Stepsize} &
                                             \textbf{\#
                                             Directions} &
                                                           \textbf{Perturbation}\\
      \hline
      $H = 1$ & 0.02  &5  & 0.2 \\
      \hline
      $H = 2$ & 0.02  & 5 & 0.2 \\
      \hline
      $H = 3$ & 0.015  & 10 & 0.2 \\
      \hline
      $H = 4$ & 0.015  & 10 & 0.2 \\
      \hline
      $H = 5$ & 0.01  & 10 & 0.2 \\
      \hline
      $H = 6$ & 0.015  & 10 & 0.2 \\
      \hline
      $H = 7$ & 0.01  & 20 & 0.2 \\
      \hline
      $H = 8$ & 0.015  & 20 & 0.2 \\
      \hline
      $H = 9$ & 0.02  & 20 & 0.2 \\
      \hline
      $H = 10$ & 0.008  & 5 & 0.2 \\
      \hline
      $H = 11$ & 0.02  & 5 & 0.15 \\
      \hline
      $H = 12$ & 0.02  & 20 & 0.2 \\
      \hline
      $H = 13$ & 0.015  & 5 & 0.15 \\
      \hline
      $H = 14$ & 0.02  & 10 &0.15  \\
      \hline
      $H = 15$ & 0.01  & 5 & 0.1 \\
      \hline
    \end{tabular} 
    \caption{Hyperparameters chosen for multi-step experiments for ExAct
    in Swimmer-v2}
    \label{tab:chosen-hyperparam-multi-exact-swimmer}
\end{table}

\begin{table}[ht]
    \centering
    \begin{tabular}{|c|c|c|c|}
    \hline
      \textbf{Horizon} & \textbf{Stepsize} &
                                             \textbf{\#
                                             Directions} &
                                                           \textbf{Perturbation}\\

      \hline
      $H = 1$ &0.0001   &20  &  0.2\\
      \hline
      $H = 2$ &  0.001 & 5 &  0.2\\
      \hline
      $H = 3$ &  0.001  & 5 & 0.2\\
      \hline
      $H = 4$ &  0.001  & 5 & 0.2 \\
      \hline
      $H = 5$ &  0.001  &10  & 0.2 \\
      \hline
      $H = 6$ &  0.001  & 5 & 0.2 \\
      \hline
      $H = 7$ &  0.001  &10  & 0.2 \\
      \hline
      $H = 8$ &  0.001  & 5 & 0.2 \\
      \hline
      $H = 9$ &  0.001  & 5 & 0.2 \\
      \hline
      $H = 10$ &  0.001  & 5 & 0.2 \\
      \hline
      $H = 11$ & 0.0008   & 5 & 0.15 \\
      \hline
      $H = 12$ &  0.001  & 5 & 0.2\\
      \hline
      $H = 13$ &  0.001  & 10 & 0.2 \\
      \hline
      $H = 14$ &  0.001  & 5 & 0.2\\
      \hline
      $H = 15$ &  0.0008  & 10 & 0.2 \\
      \hline
    \end{tabular} 
    \caption{Hyperparameters chosen for multi-step experiments for ExAct
    in HalfCheetah-v2}
    \label{tab:chosen-hyperparam-multi-exact-halfcheetah}
\end{table}

\subsubsection{Mujoco Experiments}
\label{sec:mujoco-experiments}

For all the mujoco experiments, both ARS and ExAct use a linear
policy with the same number of parameters as the dimensionality of the
state space. The hyperparameters for both algorithms are chosen as
described above. Each algorithm is run on both
environments (Swimmer-v2
and HalfCheetah-v2) for $10000$ episodes of training across $10$
random seeds (different from the ones used for tuning). This is
repeated for each horizon value $H \in \{1, 2, \cdots, 15\}$. In each
experiment, we record the mean evaluation return obtained after
training and plot the results in Figures \ref{fig:swimmer},
\ref{fig:halfcheetah}. For more details on the environments used, we
refer the reader to \citep{brockman2016openai}.

\subsubsection{LQR Experiments}
\label{sec:lqr-experiments}

In the LQR experiments, we constructed a linear dynamical system
$x_{t+1} = Ax_t + Bu_t + \xi_t$ where $x_t \in \mathbb{R}^{100}$, $A \in \mathbb{R}^{100\times
  100}$, $B \in \mathbb{R}^{100}$, $u_t \in
\mathbb{R}$ and the noise $\xi_t \sim \mathcal{N}(0_{100}, cI_{100
  \times 100})$ with a small constant $c \in \mathbb{R}^+$. We
explicitly make sure that the maximum eigenvalue of $A$ is less than 1
to avoid instability. We fix a quadratic cost function $c(x, u) =
x^TQx + uRu$, where $Q = 10^{-3}I_{100 \times 100}$ and $R = 1$. The
hyperparameters chosen for both algorithms are chosen as described
above.

For each algorithm, we run it for noise covariance values $c \in
\{10^{-4}, 5\times 10^{-4}, 10^{-3}, 5\times 10^{-3}, 10^{-2}, 5\times 10^{-2},
10^{-1}, 5\times 10^{-1}\}$
until we reach a stationary point where $\|\nabla_\theta
J(\theta)\|_2^2 \leq 0.05$. The number of interactions with the
environment allowed is capped at $10^6$ steps for each run. This is
repeated across $10$ random seeds (different from the ones used for
tuning). The number of interactions needed to reach the stationary
point as the noise covariance is increased is recorded and shown in
Figure \ref{fig:lqr}.

\end{document}